\icmltitlerunning{Importance Weighted Transfer of Samples in Reinforcement Learning}
\pgfplotsset{compat=newest}
\newtheorem{lemma}{Lemma}
\newtheorem{theorem}{Theorem}
\DeclareRobustCommand{\eg}{e.g.,\@\xspace}
\DeclareRobustCommand{\ie}{i.e.,\@\xspace}
\DeclareRobustCommand{\wrt}{w.r.t.\@\xspace}
\newcommand{\rmax}{r_{\max}}
\newcommand{\wt}[1]{\widetilde{#1}}
\newcommand{\wh}[1]{\widehat{#1}}
\newcommand{\wb}[1]{\overline{#1}}
\DeclareMathOperator*{\argmax}{\arg\,\max}
\DeclareMathOperator*{\argmin}{\arg\,\min}
\DeclareMathOperator*{\arginf}{\arg\,\inf}
\definecolor{citrine}{rgb}{0.89, 0.82, 0.04}
\definecolor{blued}{RGB}{70,197,221}
\begin{document}

\twocolumn[
\icmltitle{Importance Weighted Transfer of Samples in Reinforcement Learning}

% It is OKAY to include author information, even for blind
% submissions: the style file will automatically remove it for you
% unless you've provided the [accepted] option to the icml2017
% package.

% list of affiliations. the first argument should be a (short)
% identifier you will use later to specify author affiliations
% Academic affiliations should list Department, University, City, Region, Country
% Industry affiliations should list Company, City, Region, Country

% you can specify symbols, otherwise they are numbered in order
% ideally, you should not use this facility. affiliations will be numbered
% in order of appearance and this is the preferred way.
\icmlsetsymbol{equal}{*}

\begin{icmlauthorlist}
\icmlauthor{Andrea Tirinzoni}{polimi}
\icmlauthor{Andrea Sessa}{polimi}
\icmlauthor{Matteo Pirotta}{inria}
\icmlauthor{Marcello Restelli}{polimi}
\end{icmlauthorlist}

\icmlaffiliation{polimi}{Politecnico di Milano, Milan, Italy}
\icmlaffiliation{inria}{SequeL Team, INRIA Lille, France}

\icmlcorrespondingauthor{Andrea Tirinzoni}{andrea.tirinzoni@polimi.it}

% You may provide any keywords that you
% find helpful for describing your paper; these are used to populate
% the "keywords" metadata in the PDF but will not be shown in the document
\icmlkeywords{ICML, Machine Learning, Reinforcement Learning, Transfer Learning}

\vskip 0.3in
]

% this must go after the closing bracket ] following \twocolumn[ ...

% This command actually creates the footnote in the first column
% listing the affiliations and the copyright notice.
% The command takes one argument, which is text to display at the start of the footnote.
% The \icmlEqualContribution command is standard text for equal contribution.
% Remove it (just {}) if you do not need this facility.

\printAffiliationsAndNotice{}  % leave blank if no need to mention equal contribution
%\printAffiliationsAndNotice{\icmlEqualContribution} % otherwise use the standard text.

\begin{abstract}
We consider the transfer of experience samples (\ie tuples $\langle s,a,s',r \rangle$) in reinforcement learning (RL), collected from a set of source tasks to improve the learning process in a given target task.
%This paper is about the transfer in reinforcement learning (RL) of experience samples (\ie tuples $\langle s,a,s',r \rangle$) collected from a set of source tasks to improve the learning process in a given target task.
Most of the related approaches focus on selecting the most relevant source samples for solving the target task, but then all the transferred samples are used without considering anymore the discrepancies between the task models.
In this paper, we propose a model-based technique that automatically estimates the relevance (importance weight) of each source sample for solving the target task. 
In the proposed approach, all the samples are transferred and used by a batch RL algorithm to solve the target task, but their contribution to the learning process is proportional to their importance weight.
%        We individually consider the influence of the transition and reward model in order to be able to extract all the useful information from each sample.
By extending the results for importance weighting provided in supervised learning literature, we develop a finite-sample analysis of the proposed batch RL algorithm.
Furthermore, we empirically compare the proposed algorithm to state-of-the-art approaches, showing that it achieves better learning performance and is very robust to negative transfer, even when some source tasks are significantly different from the target task.
%        How to use importance sampling to transfer samples across different environments.
\end{abstract}

\section{Introduction}
\label{S:intro}
The goal of transfer in Reinforcement Learning (RL) \cite{sutton1998reinforcement} is to speed-up RL algorithms by reusing knowledge obtained from a set of previously learned tasks. The intuition is that the experience made by learning \textit{source tasks} might be useful for solving a related, but different, \textit{target task}. Transfer across multiple tasks may be achieved in different ways. The available approaches differ in the type of information transferred (\eg samples, value functions, parameters, policies, etc.) and in the criteria used to establish whether such knowledge could be beneficial for solving the target or not.

This work focuses on the problem of transferring samples from a set of source MDPs to augment the dataset used to learn the target MDP. To motivate our approach, consider a typical learning scenario where samples are costly to obtain. This is often the case in robotics applications, where the interaction with the real environment could be extremely time-consuming, thus reducing the number of samples available. The typical remedy of adopting a simulator often leads to sub-optimal solutions due to the differences with respect to the real environment. A more effective approach is to transfer the simulated samples to speed-up learning in the target task.

The transfer of samples has been widely studied in the supervised learning community. In particular,~\citet{crammer2008learning} formalized the problem from a theoretical perspective and provided generalization bounds for the transfer scenario. An interesting result is a trade-off between the number of tasks from which to transfer and the total number of samples.
In RL,~\citet{taylor2008transferring} and~\citet{lazaric2008transfer} proposed almost simultaneously methods to transfer single samples. While the former method focused on a model-based approach, the latter one proposed a selective approach to transfer samples into a batch RL algorithm (\eg Fitted Q-Iteration~\citep{ernst2005tree}). Furthermore,~\citet{lazaric2008transfer} considered a model-free approach to compute a similarity measure between tasks, which was used to decide which samples to transfer.
More recently,~\citet{lazaric2011transfer} analyzed the transfer of samples in batch RL from a theoretical perspective, demonstrating again the trade-off between the total number of samples and the number of tasks from which to transfer.
Finally,~\citet{laroche2017transfer} proposed a way to transfer all the samples to augment the dataset used by Fitted Q-Iteration. The limitation of this approach resides in the restrictive assumption that all the tasks are assumed to share the same transition dynamics and differ only in the reward function.
For a survey on transfer in RL, we refer the reader to \cite{taylor2009transfer,lazaric2012transfer}.

One of the main drawbacks of many previous works is that, even after a detailed selection, transferred samples are used in the target task \emph{without} accounting for the differences between the original (source) MDP and the target one, thus introducing a bias even in the asymptotic case. 
In this paper, we present a novel approach to transfer samples into a batch RL algorithm.
Unlike other works, we do not assume any particular similarity between tasks besides a shared state-action space, and we develop a new model-based methodology to automatically select the relevance (importance weight) of each sample. Existing algorithms for transferring across different state-action spaces ~\citep[\eg][]{taylor2007transfer} can be straightforwardly combined to our method.
Our approach transfers all the samples, but their impact in solving the target task is proportional to their importance weight.
To compute the importance weight of each sample, we rely on a non-parametric estimate of the MDP structure. In particular, we adopt Gaussian processes~\citep{rasmussen2006gaussian} to estimate the reward and state transition models of the source and target tasks from samples. Then, we propose a robust way to compute two sets of importance weights, one for the reward model and one for the transition model. We introduce an approximate value iteration algorithm based on Fitted Q-iteration that uses such weights to account for the distribution shift introduced by the different MDPs, thus implicitly selecting which samples have higher priority based on their likelihood to be generated from the target MDP. We provide a theoretical analysis showing the asymptotic correctness of our approach and an empirical evaluation on two classical RL domains and a real-world task.

% One of the main drawbacks of many previous works is that, even after a detailed selection, transferred samples are used in the target task without accounting for the differences with respect to the source MDP from which they are obtained, thus introducing a bias even in the asymptotic case. In this paper, we present Weighted Fitted Q-Iteration, a novel model-free approach to transfer samples into a batch RL algorithm. Unlike other works, we do not assume any particular similarity between the tasks but we develop a new methodology that uses importance weighting to automatically select which samples are the most beneficial for solving the target MDP. We adopt Gaussian processes \cite{rasmussen2006gaussian} to estimate the reward and transition distributions of the source and target tasks. Then, we propose a robust way to estimate two sets of importance weights, one for the reward model and one for the transition model. We propose an approximate value iteration algorithm that uses such weights to account for the distribution shift introduced by the different MDPs, thus implicitly selecting which samples have higher priority based on their likelihood to be generated from the target MDP. We provide a theoretical analysis showing the asymptotic correctness of our approach and an empirical evaluation on two synthetic experiments and a real-world task.
 
\section{Preliminaries}
\label{S:preliminaries}

In this section, we start by introducing our mathematical notation. Then, we recall concepts of Markov decision processes and approximate value iteration. Finally, we formalize the transfer settings considered in this work.

\paragraph{Notation.}
For a measurable space $\langle \Omega, \sigma_\Omega \rangle$, we denote by $\Delta(\Omega)$ the set of probability measures over $\sigma_{\Omega}$ and by $\mathcal{B}(\Omega,L)$ the space of measurable functions over $\Omega$ bounded by $0 < L < \infty$,
\ie $\forall f \in \mathcal{B}(\Omega,L), \forall x,~|f(x)| \leq L$.
Given a probability measure $\mu$, we define the $\ell_p$-norm of a measurable function $f$ as $\left\lVert f \right\rVert_{p,\mu} = \left( \int |f|^p d\mu \right)^{1/p}$. 
Let $z_{1:N}$ be a $\mathcal{Z}$-valued sequence $(z_1,\ldots,z_N)$ for some
space $\mathcal{Z}$.
For $\mathcal{D}_N = z_{1:N}$, the empirical norm of a function
$f : \mathcal{Z} \to \mathbb{R}$ is
$\|f\|^p_{p,\mathcal{D}_N} \coloneqq \frac{1}{N} \sum_{i=1}^{N} |f(z_i)|^p$.
Note that when $Z_i \sim \mu$, we have that
$\mathbb{E}[\|f\|_{p,\mathcal{D}_N}^p]  = \|f\|_{p,\mu}^p $.
Whenever the subscript $p$ is dropped, we implicitly consider the $\ell_2$-norm.

\paragraph{Markov Decision Process.}
We define a discounted Markov Decision Process (MDP) as a tuple $M = \langle \mathcal{S},\mathcal{A},\mathcal{P},\mathcal{R},\gamma \rangle$, where $\mathcal{S}$ is a measurable state space, $\mathcal{A}$ is a finite set of actions, $\mathcal{P} : \mathcal{S} \times \mathcal{A} \rightarrow \Delta(\mathcal{S})$ is the transition probability kernel, $\mathcal{R}: \mathcal{S} \times \mathcal{A} \rightarrow \Delta(\mathbb{R})$ is the reward probability kernel, and $\gamma \in [0,1)$ is the discount factor. We suppose ${R}(s,a) = \mathbb{E}\left[ \mathcal{R}(\cdot | s,a) \right]$ is uniformly bounded by $\rmax$. A Markov randomized policy maps states to distributions over actions as $\pi : \mathcal{S} \to \Delta(\mathcal{A})$.
As a consequence of taking an action $a_t$ in $s_t$, the agent receives a reward $r_t \sim \mathcal{R}(\cdot|s_t,a_t)$ and the state evolves accordingly to $s_{t+1} \sim \mathcal{P}(\cdot|s_t,a_t)$.
%The goal is to find a policy $\pi$ maximizing the expected discounted reward $J(\pi) = %\mathbb{E}\left[ \sum_{t=0}^\infty \gamma^t r_t \right| M, \pi]$.
% \todot{Ho semplicemente rimosso la frase con J. Mi sembra la soluzione piu' semplice.}
We define the action-value function of a policy $\pi$ as $Q^\pi(s,a) = \mathbb{E}\left[ \sum_{t=0}^\infty \gamma^t r_t \mid M, \pi, s_0 = s, a_0 = a \right]$ and the optimal action-value function as $Q^*(s,a) = \sup_\pi\ Q^\pi(s,a)$ for all $(s,a)$. Notice that $Q$ is bounded by $Q_{\max} \coloneqq \frac{\rmax}{1-\gamma}$.
Then, the optimal policy $\pi^*$ is a policy that is greedy with respect to $Q^*$, \ie for all $s\in\mathcal{S}$, $\pi^*(s) \in \argmax_{a\in\mathcal{A}} \{ Q^*(s,a) \}$.
The optimal action-value function is also the unique fixed-point of the \emph{optimal Bellman operator} $L^* : \mathcal{B}(\mathcal{S} \times \mathcal{A},Q_{\max}) \rightarrow \mathcal{B}(\mathcal{S} \times \mathcal{A},Q_{\max})$, which is defined by $(L^* Q)(s,a) \coloneqq {R}(s,a) + \gamma \int_\mathcal{S} \mathcal{P}(ds'|s,a) \max_{a'}Q(s',a')$~\citep[\eg][]{puterman1994markov}. 
%In the remaining, we denote by $T_{\widehat{R}}^*$ the optimal Bellman operator where the %expected reward $R$ is replaced by $\widehat{R}$, \ie $(T_{\widehat{R}}^* Q)(x,a) = %\widehat{R}(x,a) + \gamma \int_\mathcal{X} \mathcal{P}(dy|x,a)max_{a'}Q(y,a')$

\paragraph{Approximate solution.}
Fitted Q-Iteration (FQI)~\citep{ernst2005tree} is a batch RL algorithm that belongs to the family of Approximate Value Iteration (AVI).
AVI is a value-based approach that represents Q-functions by a hypothesis space $\mathcal{H} \subset \mathcal{B}(\mathcal{S} \times \mathcal{A},Q_{\max})$ of limited capacity. Starting from an initial action-value function $Q_0 \in \mathcal{H}$, at each iteration $k \geq 0$, AVI approximates the application of the optimal Bellman operator in $\mathcal{H}$ such that $Q_{k+1} \approx L^*Q_k$.
Formally, let $\mathcal{D}_N = \{\langle s_i,a_i, s'_i,r_i \rangle \}_{i=1}^N$ be a set of transitions such that $(s_i, a_i) \sim \mu$ and define the \emph{empirical} optimal Bellman operator as $(\wh{L}^*Q)(s_i,a_i) \coloneqq r_i + \gamma \max_{a'} Q(s'_i,a')$. Then, at each iteration $k$, FQI computes
\begin{equation}\label{E:fqi.minimization}
        Q_{k+1} = \argmin_{h\in\mathcal{H}} \left\| h - \wh{L}^*Q_k \right\|_{\mathcal{D}_N}^2.
\end{equation}

\paragraph{Transfer settings.}
We consider a set of tasks, \ie MDPs, $\{M_j = \langle \mathcal{S}, \mathcal{A}, \mathcal{P}_j, \mathcal{R}_j \rangle, j=0,\ldots, m\}$, where $M_0$ denotes the target and $M_1, \dots ,M_m$ the sources.
We suppose all tasks share the same state-action space and have potentially different dynamics and reward.
%Furthermore, we assume each task to have Gaussian transition dynamics, $\mathcal{P}_j(\cdot |s,a) = \mathcal{N}\left(\bm{\mu}_{p}^j(s,a),\sigma^2_p\bm{I}\right)$, and Gaussian reward distribution $\mathcal{R}_j(\cdot |s,a) = \mathcal{N}\left(\mu_r^{j}(s,a),\sigma^2_r\right)$, both with unknown mean and variance. 
%\todot{What about the standard deviation, can we say that it is different for each task and state-action pair? BTW we really need this assumption in general? It does not seem to me.}
Suppose that, for $j=0,\dots,m$, we have access to a dataset of $N_j$ samples from the $j$-th MDP, $\mathcal{D}_j = \{\langle s_i,a_i,s'_i, r_i \rangle\}_{i=1}^{N_j}$, where state-action pairs are drawn from a common distribution $\mu \in \Delta(\mathcal{S} \times \mathcal{A})$.\footnote{This assumption can be relaxed at the price of a much more complex theoretical analysis.}
The goal of transfer learning is to use the samples in $\mathcal{D}_1,\dots,\mathcal{D}_m$ to speed up the learning process in the target task $M_0$. 
%Our goal is to effectively transfer $\mathcal{D}_1,\dots,\mathcal{D}_M$ to \emph{augment} the dataset $\mathcal{D}_0$ used by FQI to solve the target task, thus speeding-up the learning process.

% \paragraph{Importance weighting.}
% Problem~\eqref{E:fqi.minimization} is an instance of empirical risk minimization where $X_i = (x_i, a_i)$ are the input data, $Y_i = (\wh{T}^*_{\wb{R}}Q_k)(x_i,a_i)$ are the targets and $L(f(X_i), Y_i) = |f(X_i) - Y_i|^2$ is a squared loss. We assume that $(X_i,Y_i) \sim \Delta(\mathcal{S}\times\mathcal{A}, \mathbb{R})$.
% However, in many real-world applications, samples may be collected using different sampling techniques 
% 
% 
% \paragraph{Importance weighting.}
% Consider a regression estimation problem where we have data $(X,Y) \sim q(X,Y)$ and we seek a hypothesis $h : X \rightarrow Y$ that, given a loss function $L : Y \times Y \rightarrow \mathbb{R}^+$, minimizes its expectation under a different distribution $p$. The main intuition behind importance weighting \cite{cortes2010learning} is that:
% \begin{equation}
% \mathbb{E}_p \left[ L(h(X),Y) \right] = \mathbb{E}_q \left[ \frac{p(X,Y)}{q(X,Y)} L(h(X),Y) \right]
% \end{equation}
% Thus, given a dataset of $N$ i.i.d. samples $(x_i,y_i) \sim q$, an unbiased estimator of the expected loss of hypothesis $h$ under $p$ is simply:
% \begin{equation}
% \widehat{R}(h) = \frac{1}{N} \sum_{i=1}^N w_i L(h(x_i),y_i)
% \end{equation}
% where $w_i = \frac{p(x_i,y_i)}{q(x_i,y_i)}$ are called importance weights.

\section{Importance Weights for Transfer}
In this section, we introduce our approach to the transfer of samples.
Recall that our goal is to exploit at best samples in $\{\mathcal{D}_1,\dots,\mathcal{D}_M\}$ to \emph{augment} the dataset $\mathcal{D}_0$ used by FQI to solve the target task, thus speeding up the learning process.
In the rest of the paper we exploit the fact that FQI decomposes the RL problem into a sequence of supervised learning problems. It is easy to notice that the optimization problem~\eqref{E:fqi.minimization} is an instance of empirical risk minimization, where $X_i = (s_i, a_i)$ are the input data, $Y_i = (\wh{L}^*Q_k)(s_i,a_i)$ are the targets, and $\mathcal{L}(f(X_i), Y_i) = |f(X_i) - Y_i|^2$ is a squared loss.

As mentioned in the introduction, we aim to exploit all the available samples to solve the target task. 
Suppose we adopt a naive approach where we concatenate all the samples, \ie $\wt{\mathcal{D}} = \bigcup_{j=0}^{m} \mathcal{D}_j = \bigcup_{j=0}^{m} \bigcup_{i=0}^{N_j} \langle s^j_i, a_i^j, {s'}^j_i, r_i^j \rangle$, to solve~\eqref{E:fqi.minimization}.
This approach suffers from sample selection bias~\citep{cortes2008sample}, \ie samples are collected from different distributions or domains.
In fact, although we assumed state-action pairs to be sampled from a fixed task-independent distribution, the target variables $Y$ are distributed according to the MDP they come from.

A standard technique used to correct the bias or discrepancy induced by the distribution shift is \emph{importance weighting}.
This technique consists in weighting the loss function to emphasize the error on some samples and decrease it on others, to correct the mismatch between distributions~\citep{cortes2008sample}. 
The definition of the importance weight for the point $X$ is $w(X) = P(X) / Q(X)$ where $P$ is the distribution of the target, and $Q$ is the distribution according to which sample $X$ is collected.
In our specific case, given an arbitrary sample $(X,Y)$, its joint distribution under MDP $M_j$ is $\mathbb{P}((X,Y)|M_j) = \mathbb{P}(Y| X, M_j) \mu(X)$. 
Denote by $(X_i^{(j)}, Y_i^{(j)})$ the $i$-th sample drawn from MDP $M_j$, then its importance weight is given by $w(X_i^{(j)}, Y_i^{(j)}) = \frac{\mathbb{P}(Y_i^{(j)}|X_i^{(j)},M_0)}{\mathbb{P}(Y_i^{(j)}|X_i^{(j)},M_j)}$.

\begin{algorithm}[t]
\caption{Importance Weighted Fitted Q-Iteration}\label{alg:wfqi}
\begin{algorithmic}
        \REQUIRE The number of iterations $K$, a dataset $\wt{\mathcal{D}}^+ = \bigcup_{j=0}^m \bigcup_{i=0}^{N_j} \left\{ s_{i}^{(j)},a_{i}^{(j)},{s'}_{i}^{(j)},r_{i}^{(j)},\widetilde{w}_{r,i}^{(j)},\widetilde{w}_{p,i}^{(j)} \right\}$, a hypothesis space $\mathcal{H}$
\ENSURE Greedy policy $\pi_{K}$
% \vspace{0.2cm}
\STATE $\widehat{R} \leftarrow \arginf_{h\in\mathcal{H}}\ \frac{1}{Z_r} \sum_{j,i} \widetilde{w}_{r,i}^{(j)} \left| h(s_i^{(j)}, a_i^{(j)}) - r_{i}^{(j)}\right|^2$
\STATE $Q_0 \leftarrow \widehat{R}$
\FOR{$k=0,\dots,K-1$}
% \STATE $\widehat{v}_{ij} \leftarrow \widehat{R}(x_{ij},a_{ij}) + \gamma\ max_{a'}\ Q_k(y_{ij},a')$
% \STATE $Q_{k+1} \leftarrow arginf_{h\in\mathcal{H}}\frac{1}{N} \sum_{ij} \widetilde{w}_{ij}^p \lVert h(x_{ij},a_{ij}) - v_{ij}\rVert^2$
\STATE $Y_i^{(j)} \leftarrow \wt{L}^*Q_k(s_i^{(j)}, a_i^{(j)}), \quad \forall i,j$
\STATE $Q_{k+1} \leftarrow \arginf\limits_{h \in \mathcal{H}} \frac{1}{Z_p} \sum_{j,i} \wt{w}_{i,p}^{(j)} \left| h(s_i^{(j)}, a_i^{(j)}) - Y_i^{(j)} \right|^2$
\ENDFOR
\STATE $\pi_{K}(s) \leftarrow \argmax_{a \in \mathcal{A}} \{ Q_{K}(s,a)\}, \;\; \forall s \in \mathcal{S}$
\end{algorithmic}
\end{algorithm}

By feeding FQI on the full dataset $\wt{\mathcal{D}}$ with samples weighted by $w_{i}^{(j)}$ (for short), we get an algorithm that automatically selects which samples to exploit, \ie those that, based on the importance weights, are more likely to be generated from the target MDP. 
This approach looks appealing but presents several issues. First, the distribution $\mathbb{P}(Y|X,M_j)$ is, even in the case where the MDPs are known, very hard to characterize. 
Second, consider a simple case where we have a source MDP with the same transition dynamics as the target, but with entirely different reward. 
Then, the importance weights defined above are likely to be very close to zero for any source sample, thus making transfer useless.
However, we would like a method able to leverage the fact that transition dynamics do not change, thus transferring only that part of the sample.

To overcome the second limitation, we consider the following variation of the FQI algorithm. At the first iteration of FQI, we use all the samples to fit a model $\wh{R}\approx {R}$ of the target reward function:
\begin{equation}\label{eq:rhat}
        \wh{R} = \arginf_{h \in \mathcal{H}} \frac{1}{Z_r} \sum_{j=0}^{m} \sum_{i=0}^{N_j} w_{r,i}^{(j)} \left| h(X_i^{(j)}) - r_{i}^{(j)} \right|^2,
\end{equation}
where $\mathcal{H} \subset \mathcal{B}(\mathcal{S} \times \mathcal{A},Q_{max})$ is the hypothesis space we consider to represent action-value functions\footnote{Differently from other works ~\citep[\eg][]{farahmand2012value,tosatto2017boosted}, we suppose, for the sake of simplicity, the hypothesis space to be bounded by $Q_{\max}$. Although this is a strong assumption, it can be relaxed by considering truncated functions. We refer the reader to \cite{gyorfi2006distribution} for the theoretical consequences of such relaxation.} and
\begin{equation} \label{eq:wr-ideal}
        w_{r,i}^{(j)} = \frac{\mathcal{R}_0(r_{i}^{(j)} | X_{i}^{(j)})}{\mathcal{R}_j(r_{i}^{(j)} | X_i^{(j)})}.
\end{equation} 
Problem~\eqref{eq:rhat} is unbiased if $Z_r = \sum_{j=0}^m N_j$, though $Z_r = \sum_{i,j} w_{r,i}^{(j)}$ is frequently used since it provides lower variance. The theoretical analysis is not affected by the choice of $Z_r$, while in the experiments we will use $Z_r = \sum_{i,j} w_{r,i}^{(j)}$.
%\todot{Can we use a different function space for the reward? I think that we can put %the footnote 2 in the introduction when we talk about FQI}
Then, at each iteration $k\geq 0$, FQI updates the Q-function as:
\begin{equation}\label{eq:qk}
%\resizebox{.9\hsize}{!}{$
        Q_{k+1} = \arginf_{h \in \mathcal{H}} \frac{1}{Z_p} \sum_{j=0}^{m} \sum_{i=0}^{N_j} w_{p,i}^{(j)} \left| h(X_i^{(j)}) - \wt{Y}_{i}^{(j)} \right|^2
%        $}
\end{equation}
where $\wt{Y}_{i}^{(j)} = \wt{L}^*Q_k(X_i^{(j)}) \coloneqq \wh{R}(s_i^{(j)}, a_i^{(j)}) + \gamma \max_{a'} Q_{k}({s'}_i^{(j)}, a')$ and $Q_0 = \wh{R}$.
Intuitively, instead of considering the reward $r_i^{(j)}$ in the dataset $\wt{\mathcal{D}}$, we use $\wh{R}(s_i^{(j)}, a_i^{(j)})$.
Since the stochasticity due to the reward samples is now removed, only the transition kernel plays a role, and the importance weights are given by:
\begin{equation} \label{eq:wp-ideal}
        w_{p,i}^{(j)} = \frac{\mathcal{P}_0({s'}_{i}^{(j)} | X_{i}^{(j)})}{\mathcal{P}_j({s'}_{i}^{(j)} | X_{i}^{(j)})}.
\end{equation}
The resulting algorithm, named Importance Weighted Fitted Q-Iteration (IWFQI), is shown in Algorithm~\ref{alg:wfqi}. 
In practice, we have to compute an estimate of $w_{r,i}^{(j)}$ and $w_{p,i}^{(j)}$ since $\mathcal{P}_j$ and $\mathcal{R}_j$ are unknown quantities.
We postpone this topic to Section~\ref{S:iw.estimation} since several approaches can be exploited. Instead, in the following section, we present a theoretical analysis that is independent of the way the importance weights are estimated.

\section{Theoretical Analysis}\label{S:theory}
We now study the theoretical properties of our IWFQI algorithm.
We analyze the case where we have samples from one source task, but no samples from the target task are available, \ie $m=1$, $N_0=0$, and $N_1 = N$.
A generalization to the case where target samples or samples from more sources are available is straightforward, and it only complicates our derivation.
To ease our notation, we adopt the subscript ``T'' and ``S'' to denote the target and the source.
Furthermore, we want to emphasize that the results provided in this section are \emph{independent from the way the importance weights are estimated}.

Consider the sequence of action-value functions $Q_0,Q_1,\ldots,Q_K$ computed by IWFQI.
At each iteration $k$, we incur in an error $\epsilon_k = L^*Q_k - Q_{k+1}$ in approximating the optimal Bellman operator. 
Our goal is to bound, in terms of such errors, $\|Q^*-Q^{\pi_k}\|_{1,\rho}$, \ie the expected error under distribution $\rho$ between the performance of the optimal policy and that of the policy $\pi_k$ greedy \wrt $Q_k$. Here $\rho$ is an arbitrary evaluation distribution over $\mathcal{S} \times \mathcal{A}$ that the user can freely choose. In practice, it might coincide with the sampling distribution $\mu$. 
Since IWFQI belongs to the family of AVI algorithms, we can resort to Theorem 3.4 in~\citep{farahmand2011regularization}. We report here the version with $\ell_1$-norm for the sake of completeness.
\begin{theorem} \label{th:err-prop}
        (Theorem 3.4 of \cite{farahmand2011regularization}) Let $K$ be a positive integer and $Q_{\max}\leq\frac{\rmax}{1-\gamma}$. Then, for any sequence $(Q_k)_{k=0}^K \subset B(\mathcal{S}\times\mathcal{A},Q_{\max})$ and the corresponding sequence $(\epsilon_k)_{k=0}^K$, where $\epsilon_k = L^* Q_k - Q_{k+1}$, we have:
\begin{align*}
\|Q^* &- Q^{\pi_K}\|_{1,\rho} \leq \frac{2\gamma}{(1-\gamma)^2}
\Bigg[ 2\gamma^K Q_{\max}\\
& + \inf_{b \in [0,1]}\left\{ C_{\textsc{VI},\rho,\mu}^{\frac{1}{2}}(K;b)\mathcal{E}^{\frac{1}{2}}(\epsilon_0,\ldots,\epsilon_{K-1};b) \right\} \Bigg],
\end{align*}
where:
\begin{equation*}
\mathcal{E}(\epsilon_0,\ldots,\epsilon_{K-1};b) = \sum_{k=0}^{K-1} \alpha_k^{2b}\|\epsilon_k\|_\mu^2.
\end{equation*}
\end{theorem}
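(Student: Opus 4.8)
The plan is to reconstruct the standard error-propagation argument for approximate value iteration (in the style of Munos and of Farahmand, Munos \& Szepesv\'ari), adapted to action-value functions and the $\ell_1(\rho)$ loss. Write $L^\pi Q = R + \gamma P^\pi Q$ for the Bellman operator of a policy $\pi$, where $(P^\pi Q)(s,a) = \int_{\mathcal{S}} \mathcal{P}_0(ds'|s,a)\,Q(s',\pi(s'))$ is the one-step transition kernel acting on $Q$-functions, so that $L^* Q = \max_\pi L^\pi Q$ and $Q^* = L^* Q^* = L^{\pi^*} Q^*$. The first step is a pointwise recursion for the distance to $Q^*$. Since $Q_{k+1} = L^* Q_k - \epsilon_k$ and $L^* Q_k \geq L^{\pi^*} Q_k$, one obtains the upper bound $Q^* - Q_{k+1} \leq \gamma P^{\pi^*}(Q^* - Q_k) + \epsilon_k$; the matching inequality in the other direction follows from $L^* Q_k = L^{\pi_k} Q_k$ with $\pi_k$ greedy \wrt $Q_k$. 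Keeping both one-sided inequalities (to control the sign of $Q^* - Q_k$) and unrolling from $k=0$ to $K-1$ expresses $Q^* - Q_K$ as $\gamma^K$ times a kernel product applied to the initial error $Q^* - Q_0$, plus a sum $\sum_k \gamma^{K-1-k}(\text{product of }k\text{-step kernels})\,\epsilon_k$.

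The second step converts the distance $Q^* - Q_K$ into the actual performance loss. From $Q^{\pi_K} = L^{\pi_K} Q^{\pi_K}$ one gets $(I - \gamma P^{\pi_K})(Q^* - Q^{\pi_K}) = Q^* - L^{\pi_K} Q^*$, and using that $\pi_K$ is greedy \wrt $Q_K$ (so $L^{\pi_K} Q_K = L^* Q_K \geq L^{\pi^*} Q_K$) yields the elementary bound
$Q^* - Q^{\pi_K} \leq (I - \gamma P^{\pi_K})^{-1}\big[\gamma(P^{\pi^*} - P^{\pi_K})(Q^* - Q_K)\big]$, where $(I-\gamma P^{\pi_K})^{-1} = \sum_{t\geq 0}\gamma^t (P^{\pi_K})^t$ is a monotone (positivity-preserving) operator. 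Substituting the unrolled expression for $Q^* - Q_K$ produces a single pointwise inequality bounding $Q^* - Q^{\pi_K}$ by a nonnegative linear combination of the $|\epsilon_k|$, whose coefficients are products of stochastic kernels weighted by powers of $\gamma$, plus a $\gamma^K Q_{\max}$ initialization term coming from $\|Q^* - Q_0\|_\infty \leq 2Q_{\max}$.

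The third step is the passage to norms. Taking the $\ell_1(\rho)$ norm and factoring out the $\frac{2\gamma}{(1-\gamma)^2}$ prefactor, I would introduce the normalized weights $\alpha_k$ (proportional to $\gamma^{K-1-k}$ and summing to one) so that the kernel-weighted sum becomes a convex combination; Jensen's inequality then moves the norm inside each summand. Each term now has the form of the distribution $\rho$ pushed through a product of $P^\pi$ kernels, integrated against $|\epsilon_k|$; bounding the Radon--Nikodym derivative of each pushed-forward measure \wrt the sampling distribution $\mu$ by its supremum, and summing these suprema with the appropriate weights, is exactly what the concentrability coefficient $C_{\textsc{VI},\rho,\mu}(K;b)$ collects. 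A final Cauchy--Schwarz split of each summand into an $\alpha_k^{b}$ factor (absorbed into $C_{\textsc{VI}}$) and an $\alpha_k^{-b}$ factor converts $\ell_1(\mu)$ of $\epsilon_k$ into $\ell_2(\mu)$, producing $C_{\textsc{VI},\rho,\mu}^{1/2}(K;b)\,\mathcal{E}^{1/2}(\epsilon_0,\dots,\epsilon_{K-1};b)$ with $\mathcal{E} = \sum_k \alpha_k^{2b}\|\epsilon_k\|_\mu^2$; the free exponent $b\in[0,1]$ governs how weight is split between the two factors, and taking the infimum over $b$ gives the stated bound.

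The main obstacle is the bookkeeping in the third step: defining the concentrability coefficient precisely as a weighted sum of supremum Radon--Nikodym derivatives of $\rho$ propagated through \emph{all} admissible products of one-step kernels, and verifying that the Cauchy--Schwarz split with exponent $b$ aligns so that the $\mu$-weighted $\ell_2$ error norms factor out cleanly while the residual distribution-mismatch terms assemble into precisely $C_{\textsc{VI},\rho,\mu}(K;b)$. Matching the powers of $\gamma$, the normalization of the $\alpha_k$, and the $(1-\gamma)^{-2}$ factor requires care, but it is routine once the pointwise inequality and the concentrability definition are fixed.
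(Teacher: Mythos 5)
The paper does not prove this statement: it is imported verbatim (as the $\ell_1$ instance) from Theorem 3.4 of Farahmand's thesis, and the text explicitly defers the definitions of $C_{\textsc{VI},\rho,\mu}$ and $\alpha_k$ to Chapter 3 of that reference. Your outline correctly reconstructs the argument used there (and in Farahmand, Munos \& Szepesv\'ari's error-propagation paper): the two one-sided pointwise recursions $\gamma P^{\pi_k}(Q^*-Q_k)+\epsilon_k \leq Q^*-Q_{k+1} \leq \gamma P^{\pi^*}(Q^*-Q_k)+\epsilon_k$, the conversion to performance loss via $(I-\gamma P^{\pi_K})^{-1}\bigl[\gamma(P^{\pi^*}-P^{\pi_K})(Q^*-Q_K)\bigr]$, and the Jensen/Cauchy--Schwarz bookkeeping with the normalized weights $\alpha_k \propto \gamma^{K-k-1}$ and the free split exponent $b$ that yields $C_{\textsc{VI},\rho,\mu}^{1/2}(K;b)\,\mathcal{E}^{1/2}$. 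I see no gap in the plan; the only part left genuinely unexecuted is the third step you flag yourself, which is exactly where the precise definition of the concentrability coefficient must be fixed so that the exponents match --- this is routine but is the entire content of that step, so a complete write-up would need to state that definition rather than treat it as given.
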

We refer the reader to Chapter 3 of \cite{farahmand2011regularization} for the definitions of the coefficients $C_{\textsc{VI},\rho,\mu}$ and $\alpha_k$.

Intuitively, the bound given in Theorem~\ref{th:err-prop} depends on the errors made by IWFQI in approximating the optimal Bellman operator at each iteration. Thus, our problem reduces to bounding such errors. \citet{cortes2010learning} already provided a theoretical analysis of importance weighted regression. However, their results are not immediately applicable to our case since they only consider a regression problem where the target variable $Y$ is a deterministic function of the input $X$. On the other hand, we have the more general regression estimation problem where $Y$ is a random variable, and we want to learn its conditional expectation given $X$.
Thus, we extend Theorem 4 of~\cite{cortes2010learning} to provide a bound on the expected $\ell_2$-error $\|\wh{h}-h^*\|_{\mu}$ between the hypothesis $\wh{h}$ returned by a weighted regressor (with estimated weights $\widetilde{w}$) and the regression function $h^*$.
Following~\citep{cortes2010learning}, we denote by $Pdim(U)$ the pseudo-dimension of a real-valued function class $U$.
The proof is in the appendix.
\begin{restatable}{theorem}{thmthree} \label{th:regr}
Let $\mathcal{H} \subset \mathcal{B}(X,F_{\max})$ be a functional space. Suppose we have a dataset of $N$ i.i.d. samples $ \mathcal{D} = \{(x_i,y_i)\}$ distributed according to $Q(X,Y) = q(Y | X)\mu(X)$, while $P(X,Y) = p(Y | X)\mu(X)$ is the target distribution. Assume $|Y| \leq F_{\max}$ almost surely. Let $w(x,y) = \frac{p(y | x)}{q(y | x)}$, $\widetilde{w}(x,y)$ be any positive function,
% $h(x) = \argmin_{f \in \mathcal{H}} \left\{ \frac{1}{N}\sum_{i=1}^N \widetilde{w}(x_i,y_i)|f(x_i) - y_i|^2\right\}$, 
$\wh{h}(x) = \argmin_{f \in \mathcal{H}} \wh{\mathbb{E}}_{\mathcal{D}}\left[ \wt{w}(X,Y) |f(X)-Y|^2 \right]$, 
$h^*(x) = \mathbb{E}_p[Y|x]$, $g(x) = \mathbb{E}_q[\wt{w}(x,Y) | x] - 1$, and $M(\wt{w}) = \sqrt{\mathbb{E}_{Q}[\widetilde{w}(X,Y)^2]} + \sqrt{\wh{\mathbb{E}}_{\mathcal{D}}[\wt{w}(X,Y)^2]}$, where $\widehat{\mathbb{E}}_{\mathcal{D}}$ denotes the empirical expectation on $\mathcal{D}$. Furthermore, assume $d = Pdim(\{ |f(x)-y|^2\ : f\in\mathcal{H} \}) < \infty$ and $\mathbb{E}_Q[\widetilde{w}(X,Y)^2] < \infty$. Then, for any $\delta>0$, the following holds with probability at least $1-2\delta$:
\begin{equation*}
\begin{split}
        \|\wh{h}-h^*\|_\mu &\leq
\inf_{f \in \mathcal{H}}\ \|f-h^*\|_\mu +
F_{\max}\sqrt{\|g\|_{1,\mu}} \\
&+ 2^{13/8}F_{\max} \sqrt{M(\wt{w})} 
\left(
\frac{d\log\frac{2Ne}{d} + \log\frac{4}{\delta}}{N}
\right)^{\frac{3}{16}}\\
&+ 2F_{\max}\|\widetilde{w}-w\|_Q
\end{split}
\end{equation*}
\end{restatable}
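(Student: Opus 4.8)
The plan is to reduce the $\ell_2$ statement to an excess-risk comparison and then track, term by term, the three sources of error: the approximation power of $\mathcal H$, the bias introduced by using the estimated weights $\wt w$ in place of the true weights $w$, and the statistical fluctuation of the weighted empirical risk. Throughout I write $u \coloneqq \wh h - h^*$ and introduce three auxiliary risk functionals: the target risk $\mathcal J_P(f) = \mathbb E_P[|f(X)-Y|^2]$, the population weighted risk $\mathcal J_{\wt w}(f) = \mathbb E_Q[\wt w(X,Y)|f(X)-Y|^2]$, and its empirical counterpart $\wh{\mathcal J}(f) = \wh{\mathbb E}_{\mathcal D}[\wt w(X,Y)|f(X)-Y|^2]$, the last being exactly the objective minimized by $\wh h$. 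The elementary facts I will use repeatedly are that, since the two distributions share the marginal $\mu$, conditioning on $X=x$ gives $\mathbb E_q[w(x,Y)\mid x] = 1$ and $\mathbb E_q[w(x,Y)\,Y\mid x] = \mathbb E_p[Y\mid x] = h^*(x)$; these identities are precisely what replaces the trivial ``deterministic-label'' step of \citet{cortes2010learning} and are where the randomness of $Y$ is absorbed.

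First I would establish a master identity for $\|u\|_\mu^2$. Expanding $|f-Y|^2 - |h^*-Y|^2 = (f-h^*)^2 + 2(f-h^*)(h^*-Y)$ inside $\mathcal J_{\wt w}$ and conditioning on $X$, the cross term's conditional expectation is $\mathbb E_q[\wt w\,(h^*-Y)\mid x] = h^*(x)g(x) + \beta(x)$, where $g(x)=\mathbb E_q[\wt w\mid x]-1$ is the given weight-mass bias and $\beta(x) \coloneqq \mathbb E_q[(w-\wt w)Y\mid x]$; moreover $\mathbb E_q[\wt w\mid x] = 1 + g(x)$ turns $\mathbb E_Q[\wt w\,u^2]$ into $\|u\|_\mu^2 + \mathbb E_\mu[u^2 g]$. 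Collecting everything yields
\[ \|u\|_\mu^2 = \big(\mathcal J_{\wt w}(\wh h) - \mathcal J_{\wt w}(h^*)\big) - \mathbb E_\mu[u^2 g] - 2\,\mathbb E_\mu\big[u\,(h^* g + \beta)\big]. \]
The two $g$-dependent terms are bounded crudely via $|u|\le 2F_{\max}$, $|h^*|\le F_{\max}$, each contributing a multiple of $F_{\max}^2\|g\|_{1,\mu}$, which becomes the $F_{\max}\sqrt{\|g\|_{1,\mu}}$ term after the final square root. For the $\beta$ term the bound must instead keep $\|u\|_\mu$ as a factor: Cauchy--Schwarz in $x$ and then in $Y$ (using $\mathbb E_q[Y^2\mid x]\le F_{\max}^2$) gives $\|\beta\|_\mu \le F_{\max}\|\wt w - w\|_Q$, so $2\mathbb E_\mu[u\beta] \le 2F_{\max}\|u\|_\mu\|\wt w - w\|_Q$. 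This term is \emph{linear} in $\|u\|_\mu$, so the whole estimate becomes a quadratic inequality $\|u\|_\mu^2 \le b\,\|u\|_\mu + c$ with $b = 2F_{\max}\|\wt w-w\|_Q$; solving it produces the first-power term $2F_{\max}\|\wt w - w\|_Q$ exactly, while $\sqrt c$ supplies the remaining contributions via subadditivity of $\sqrt{\cdot}$.

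It remains to upper bound $\mathcal J_{\wt w}(\wh h) - \mathcal J_{\wt w}(h^*)$. Inserting the best-in-class $f^\dagger \in \argmin_{f\in\mathcal H}\|f-h^*\|_\mu$ and splitting, the bracket $\wh{\mathcal J}(\wh h)-\wh{\mathcal J}(f^\dagger)\le 0$ by optimality of $\wh h$, the approximation bracket $\mathcal J_{\wt w}(f^\dagger)-\mathcal J_{\wt w}(h^*)$ is handled by the same cross-term identity (producing $\|f^\dagger - h^*\|_\mu^2 = (\inf_f\|f-h^*\|_\mu)^2$ plus lower-order $g$- and $\|\wt w-w\|_Q$-contributions absorbed by AM--GM), and the two remaining brackets are bounded by $2\sup_{f\in\mathcal H}|\mathcal J_{\wt w}(f)-\wh{\mathcal J}(f)|$. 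This uniform deviation over the weighted-loss class $\{(x,y)\mapsto \wt w(x,y)|f(x)-y|^2 : f\in\mathcal H\}$ is the main obstacle: because $\wt w$ is only assumed positive (hence possibly unbounded), Hoeffding-type arguments fail and one must follow Theorem~4 of \citet{cortes2010learning}, controlling the deviation through the pseudo-dimension $d$ (which is unchanged by multiplying every loss by the fixed positive function $\wt w$, since thresholds rescale) together with the second-moment quantity $M(\wt w)$. This yields a deviation of order $F_{\max}^2\,M(\wt w)\,\big((d\log\tfrac{2Ne}{d}+\log\tfrac4\delta)/N\big)^{3/8}$, whose square root gives the stated $2^{13/8}F_{\max}\sqrt{M(\wt w)}(\cdots)^{3/16}$ term.

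Finally I would assemble the pieces, solve the quadratic inequality in $\|u\|_\mu$, take square roots, and use $\sqrt{a+b}\le\sqrt a+\sqrt b$ to separate the four contributions into the claimed form; a union bound over the two concentration events underlying the deviation bound of \citet{cortes2010learning} yields the $1-2\delta$ confidence. The genuinely delicate bookkeeping is twofold: forcing $\|\wt w - w\|_Q$ to appear to first power rather than under a square root (resolved by the Cauchy--Schwarz bound on $\beta$ together with the quadratic inequality), and importing the unbounded-weight concentration rate of \citet{cortes2010learning}, which is exactly what dictates the unusual $3/16$ exponent.
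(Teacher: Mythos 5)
Your route is genuinely different from the paper's. The paper never forms the excess risk $\mathcal J_{\wt w}(\wh h)-\mathcal J_{\wt w}(h^*)$ or a quadratic inequality in $\|\wh h-h^*\|_\mu$; instead it pivots through the \emph{tilted regression function} $\widetilde h(x)=\mathbb E_q[\wt w(x,Y)Y\mid x]$, for which $\wh h$ is (up to the $g$-bias and the concentration terms) an ordinary least-squares ERM. Concretely, it expands $\mathbb E_Q[\wt w\,|f(X)-Y|^2]=\|f-\widetilde h\|_\mu^2+\mathbb E_\mu[f^2 g]+K$ with $K$ independent of $f$, applies the two one-sided bounds of Corollary~1 of \citet{cortes2010learning} (one to $\wh h$ with the population second moment, one to $h_0=\argmin_f\|f-\widetilde h\|_\mu$ with the empirical one, which is exactly how $M(\wt w)$ arises as a sum of two square roots), obtains $\|\wh h-\widetilde h\|_\mu^2\le\inf_f\|f-\widetilde h\|_\mu^2+F_{\max}^2\|g\|_{1,\mu}+\cdots$, and finally pays $\|\widetilde h-h^*\|_\mu\le F_{\max}\|\wt w-w\|_Q$ twice (once in the triangle inequality, once in relating $\inf_f\|f-\widetilde h\|_\mu$ to $\inf_f\|f-h^*\|_\mu$), which is precisely where the coefficient $2$ comes from. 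Your identities $\mathbb E_q[wY\mid x]=h^*(x)$ and $\mathbb E_q[\wt w(h^*-Y)\mid x]=h^*g+\beta$ are correct and play the same role as the paper's $\widetilde h$; your Cauchy--Schwarz bound $\|\beta\|_\mu\le F_{\max}\|\wt w-w\|_Q$ is the exact analogue of the paper's Jensen step; and the concentration ingredient is the same.

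The one substantive shortfall is that, as written, your bookkeeping does not recover the stated constants, so you have proved a bound of the same form but not the theorem verbatim. Bounding $-\mathbb E_\mu[u^2g]$ and $-2\mathbb E_\mu[uh^*g]$ ``crudely via $|u|\le 2F_{\max}$, $|h^*|\le F_{\max}$'' gives $8F_{\max}^2\|g\|_{1,\mu}$, hence a coefficient $2\sqrt2$ rather than $1$ on $F_{\max}\sqrt{\|g\|_{1,\mu}}$; you should instead note that $u^2+2uh^*=\wh h^2-h^{*2}$, and that combined with the matching term from the $f^\dagger$ bracket the total $g$-contribution telescopes to $\mathbb E_\mu[(f^{\dagger2}-\wh h^2)g]\le F_{\max}^2\|g\|_{1,\mu}$ --- this is exactly the cancellation ($h_0^2-\wh h^2$) the paper exploits. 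More stubbornly, the cross term $2\mathbb E_\mu[(f^\dagger-h^*)\beta]$ in the approximation bracket cannot be folded into the $b\|u\|_\mu$ part of your quadratic inequality (it involves $f^\dagger$, not $\wh h$), and absorbing it by AM--GM inflates the coefficient of either $\inf_f\|f-h^*\|_\mu$ or $F_{\max}\|\wt w-w\|_Q$ above the stated $1$ and $2$ respectively (e.g.\ to $3F_{\max}\|\wt w-w\|_Q$ total). The paper's detour through $\widetilde h$ avoids this because the approximation term is measured against $\widetilde h$, for which no $\beta$-cross-term appears. So either adopt the $\widetilde h$ pivot for the final assembly, or accept slightly degraded constants; everything else in your argument (including the observation that pointwise multiplication by $\wt w$ preserves the pseudo-dimension, and the union bound giving $1-2\delta$) is sound.
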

Notice that this result is of practical interest outside of the reinforcement learning field.
Here it is used to bound the errors $\|\epsilon_k\|_\mu$ in order to state the following result.
\begin{restatable}{theorem}{thmfour} \label{th:wfqi-err}
        Let $\mathcal{H} \subset B(\mathcal{S} \times \mathcal{A},Q_{\max})$ be a hypothesis space, $\mu$ a distribution over $\mathcal{S}\times\mathcal{A}$, $(Q_{i})_{i=0}^{k+1}$ a sequence of Q-functions as defined in Equation~\eqref{eq:qk}, and $L^*$ the optimal Bellman operator of the target task. Suppose to have a dataset of N i.i.d. samples $\mathcal{D}$ drawn from the source task $M_S$ according to a joint distribution $\phi_S$. Let $w_p$, $w_r$ denote the ideal importance weights defined in \eqref{eq:wp-ideal} and \eqref{eq:wr-ideal}, and $\wt{w}_r(r|s,a)$, $\wt{w}_p(s'|s,a)$ denote arbitrary positive functions with bounded second moments.
        Define $g_r(s,a) = \mathbb{E}_{\mathcal{R}_S}[\wt{w}_r(r|s,a)|s,a]-1$, $M(\wt{w}_r)=\sqrt{\mathbb{E}_{\phi^R_S}[\wt{w}_r(r|s,a)^2]} + \sqrt{\wh{\mathbb{E}}_{\mathcal{D}}[\wt{w}_r(r|s,a)^2]}$, where $\phi^R_S(r|s,a) = \mu(s,a)\mathcal{R}_S(r|s,a)$.
        Similarly, define $g_p$, $M(\wt{w}_p)$, and $\phi^P_S(s'|s,a)$ for the transition model. Then, for any $\delta>0$, with probability at least $1-4\delta$:
{\small
\begin{align*}
        &\|L^* Q_k - Q_{k+1}\|_\mu
\leq Q_{\max}\sqrt{\|g_p\|_{1,\mu}} + 2\rmax \sqrt{\|g_r\|_{1,\mu}} \\
&+ 2Q_{\max}\|\wt{w}_p-w_p\|_{\phi^P_S} + 4\rmax\|\wt{w}_r-w_r\|_{\phi^R_S} \\
&+ \inf_{f \in \mathcal{H}} \|f-(L^*)^{k+1}Q_0\|_{\mu} + 2 \inf_{f \in \mathcal{H}} \|f-R\|_{\mu} \\ 
&+ \frac{Q_{\max}}{2^{-\frac{13}{8}}}\left(\sqrt{M(\wt{w}_p)}+2\sqrt{M(\wt{w}_r)}\right)
\left(\frac{d\log\frac{2Ne}{d} + \log\frac{4}{\delta}}{N} \right)^{\frac{3}{16}} \\ 
&+ \sum_{i=0}^{k-1} (\gamma C_{\textsc{AE}}(\mu))^{i+1}\|\epsilon_{k-i-1}\|_\mu,
\end{align*}}
where $C_{\textsc{AE}}$ is the concentrability coefficient of one-step transitions as defined in~\citep[][Definition 5.2]{farahmand2011regularization}.
\end{restatable}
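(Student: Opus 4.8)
The plan is to split the one-step error with the triangle inequality into a reward part and a transition part, bound each by the importance-weighted regression result (Theorem~\ref{th:regr}), and then unfold the resulting approximation error against the \emph{exact} value-iteration sequence. Write $V_Q(s) = \max_a Q(s,a)$ and let $\wt{L}^* Q_k = \wh{R} + \gamma\,\mathcal{P}_0 V_{Q_k}$ denote the empirical Bellman operator that uses the fitted reward $\wh{R}$ (so $\mathcal{P}_0 V_{Q_k}$ abbreviates $\int \mathcal{P}_0(ds'\mid\cdot)\,V_{Q_k}(s')$). Since $L^*$ and $\wt{L}^*$ differ only in the reward, $L^* Q_k - \wt{L}^* Q_k = R - \wh{R}$, and therefore
\begin{equation*}
\|L^* Q_k - Q_{k+1}\|_\mu \le \|R - \wh{R}\|_\mu + \|\wt{L}^* Q_k - Q_{k+1}\|_\mu .
\end{equation*}
It then remains to control the two summands.

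First I would apply Theorem~\ref{th:regr} to each empirical-risk-minimization subproblem of Algorithm~\ref{alg:wfqi}, treating the realized $\wh{R}$ and $Q_k$ as fixed so that each regression has a well-defined target function (the uniform pseudo-dimension guarantee of Theorem~\ref{th:regr} licenses this). For the reward fit~\eqref{eq:rhat} the inputs are $X\sim\mu$, the targets are reward samples with $|r|\le\rmax$, the source/target conditionals are $\mathcal{R}_S,\mathcal{R}_0$, so the regression function is $h^* = \mathbb{E}_{\mathcal{R}_0}[r\mid X] = R$ with ideal weight $w_r$; taking $F_{\max}=\rmax$ (the targets and, after truncation, $\wh{R}$ are bounded by $\rmax$) bounds $\|\wh{R}-R\|_\mu$ by $\inf_f\|f-R\|_\mu$, $\rmax\sqrt{\|g_r\|_{1,\mu}}$, the $\sqrt{M(\wt{w}_r)}$ concentration term, and $2\rmax\|\wt{w}_r-w_r\|_{\phi^R_S}$. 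For the transition fit~\eqref{eq:qk} the targets $\wt Y = \wh{R}(X)+\gamma V_{Q_k}(s')$ are bounded by $Q_{\max}$, the conditionals are $\mathcal{P}_S,\mathcal{P}_0$, and the regression function is $h^* = \mathbb{E}_{\mathcal{P}_0}[\wt Y\mid X] = \wt{L}^* Q_k$ with ideal weight $w_p$; taking $F_{\max}=Q_{\max}$ bounds $\|Q_{k+1}-\wt{L}^* Q_k\|_\mu$ by $\inf_f\|f-\wt{L}^* Q_k\|_\mu$ plus the $g_p$, $M(\wt{w}_p)$, and $\|\wt{w}_p-w_p\|_{\phi^P_S}$ terms. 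A union bound over the two applications produces the overall probability $1-4\delta$.

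The substantive work is rewriting the approximation error $\inf_f\|f-\wt{L}^* Q_k\|_\mu$ in terms of the exact iterate $(L^*)^{k+1}Q_0$, which is the source of the last two lines of the statement. Using $(L^*)^{k+1}Q_0 = R + \gamma\,\mathcal{P}_0 V_{(L^*)^kQ_0}$ gives
\begin{equation*}
\wt{L}^* Q_k - (L^*)^{k+1}Q_0 = (\wh{R}-R) + \gamma\,\mathcal{P}_0\big(V_{Q_k}-V_{(L^*)^kQ_0}\big),
\end{equation*}
so $\inf_f\|f-\wt{L}^* Q_k\|_\mu \le \inf_f\|f-(L^*)^{k+1}Q_0\|_\mu + \|\wh{R}-R\|_\mu + \gamma\|\mathcal{P}_0(V_{Q_k}-V_{(L^*)^kQ_0})\|_\mu$. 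The middle summand is the reward error $\|\wh{R}-R\|_\mu$ once more; bounding it by the same reward estimate from the previous step is what doubles every reward-related term (the $2\rmax$, $4\rmax$, $2\inf_f\|f-R\|_\mu$, and the $2\sqrt{M(\wt{w}_r)}$ factors), after using $\rmax\le Q_{\max}$ inside the shared concentration term. For the last summand I would use $|V_{Q}-V_{Q'}|\le\max_a|Q-Q'|$ together with the one-step concentrability coefficient $C_{\textsc{AE}}(\mu)$ of \cite{farahmand2011regularization} to get $\gamma\|\mathcal{P}_0(V_{Q_k}-V_{(L^*)^kQ_0})\|_\mu \le \gamma C_{\textsc{AE}}(\mu)\,\|Q_k-(L^*)^kQ_0\|_\mu$.

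Finally, the propagation term comes from unrolling $Q_k-(L^*)^kQ_0$. Writing $Q_k-(L^*)^kQ_0 = -\epsilon_{k-1} + \big(L^*Q_{k-1}-L^*((L^*)^{k-1}Q_0)\big)$ with $\epsilon_{k-1}=L^*Q_{k-1}-Q_k$ and applying the same $V$-contraction-plus-concentrability estimate to the bracket yields the recursion $\|Q_k-(L^*)^kQ_0\|_\mu \le \|\epsilon_{k-1}\|_\mu + \gamma C_{\textsc{AE}}(\mu)\|Q_{k-1}-(L^*)^{k-1}Q_0\|_\mu$, which unfolds (base case $Q_0-Q_0=0$) to $\sum_{i=0}^{k-1}(\gamma C_{\textsc{AE}}(\mu))^{i}\|\epsilon_{k-1-i}\|_\mu$; multiplying by the leading factor $\gamma C_{\textsc{AE}}(\mu)$ reproduces the final sum. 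Collecting the four estimates gives the claim. The hard part will be precisely this middle step: recognizing that the transition regression targets $\wt{L}^*Q_k$ rather than $(L^*)^{k+1}Q_0$, and that faithfully converting between them both forces the doubled reward error and generates the geometric accumulation of past Bellman residuals through $C_{\textsc{AE}}(\mu)$ — everything else is a direct instantiation of Theorem~\ref{th:regr}.
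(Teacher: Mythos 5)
Your proposal is correct and follows essentially the same route as the paper's proof: the identical decomposition $\|L^*Q_k-Q_{k+1}\|_\mu \le \|R-\wh{R}\|_\mu + \|\wt{L}^*Q_k-Q_{k+1}\|_\mu$, two applications of Theorem~\ref{th:regr} (the paper packages them as two lemmas), and the same re-absorption of a second $\|R-\wh{R}\|_\mu$ when converting $\inf_f\|f-\wt{L}^*Q_k\|_\mu$ into $\inf_f\|f-(L^*)^{k+1}Q_0\|_\mu$, which is exactly what doubles the reward-related terms. The only difference is that the paper cites Theorem 5.3 of \citet{farahmand2011regularization} for the propagation inequality, whereas you re-derive that recursion inline.
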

As expected, four primary sources of error contribute to our bound: (i) the bias due to estimated weights (first four terms), (ii) the approximation error (fifth and sixth term), (iii) the estimation error (seventh term), (iv) the propagation error (eighth term). Notice that, assuming to have a consistent estimator for the importance weights (an example is given in Section \ref{S:iw.estimation}), the bias term vanishes as the number of samples $N$ tends to infinity. 
Furthermore, the estimation error decreases with $N$, thus vanishing as the number of samples increases. Thus, in the asymptotic case our bound shows that the only source of error is due to the limited capacity of the functional space $\mathcal{H}$ under consideration, as in most AVI algorithms. Furthermore, we notice that fitting the reward function and using it instead of the available samples propagates an error term through iterations, \ie the approximation error $\inf_{f\in\mathcal{H}}\|f-R\|_\mu$. If we were able to estimate the importance weights for the typical case where both reward and transition samples are used, we could get rid of such error. However, since the resulting weights somehow depend on the joint densities between $\mathcal{P}$ and $\mathcal{R}$, we expect their variance, as measured by $M(\widetilde{w})$, to be much bigger, thus making the resulting bound even larger. Furthermore, we argue that, when the reward function is simple enough and only a limited number of samples is available, a separate fit might be beneficial even for plain FQI. In fact, the variance of the empirical optimal Bellman operator can be reduced by removing the source of stochasticity due to the reward samples at the cost of propagating a small approximation error through iterations.
% \todot{Questa per ora l'ho rigirata cosi'. Effettivamente ci sono casi in cui fittare la reward e' semplice, e quindi il nostro claim e' vero. Pero' nei casi in cui fittare la reward e' molto difficile (e.g., reward sparse, discontinue), potremmo fare solo danni. Quando la reward e' nota, invece, non c'e' alcun bisogno di fittarla.}
The bounds for AVI,~\citep[\eg][]{munos2008finite,farahmand2011regularization,farahmand2012value}, can be straightforwardly extended to such case by adopting a procedure similar to the one described in the proof of Theorem~\ref{th:wfqi-err}. Finally, in most practical applications the reward function is actually known and, thus, does not need to be fitted. In such cases, it is possible to get rid of the corresponding terms in Theorem \ref{th:wfqi-err}, allowing transfer to occur without errors even when rewards are completely different between tasks.

\section{Estimation of Importance Weights}\label{S:iw.estimation}
In this section, we specify how to compute the importance weights. Since $\mathcal{P}$ and $\mathcal{R}$ are unknown, we only have access to an estimation of $w_{r,i}^{(j)}$ and $w_{p,i}^{(j)}$ used in~\eqref{eq:wr-ideal} and~\eqref{eq:wp-ideal}, respectively.
To obtain an approximation of the unknown densities, we consider Gaussian Processes (GPs) although any distribution matching technique and/or probabilistic model can be used.

\paragraph{Gaussian Processes.} We use the available samples to fit two Gaussian processes (GPs)~\cite{rasmussen2006gaussian} for each task $M_j$: one for the transition model $\mathcal{P}_j$ and one for the reward model $\mathcal{R}_j$.
To motivate our choice, GPs have been successfully adopted to model stochastic dynamical systems with high-dimensional and continuous state-action spaces in many existing works~\citep[\eg][]{rasmussen2003gaussian,deisenroth2011pilco,doshi2016hidden,Berkenkamp2017safemb}.
For the sake of simplicity, we only show how to compute the importance weights for the reward model. Our procedure straightforwardly generalizes to the transition model.

Given a sample $\langle s,a,r \rangle$ from the $j$-th task, the $j$-th GP returns a Gaussian distribution over the reward's mean, \ie $\wb{r}(s,a) \sim \mathcal{N}(\mu_{GP_j}(s,a),\sigma_{GP_j}^2(s,a))$, which, together with the target GP's prediction, induces a distribution over the importance weights. In practice, the choice of a single importance weight can rely on some statistics of such distribution (e.g., its mean or mode). Perhaps not surprisingly, this is made non-trivial by the fact that explicitly characterizing such distribution is very complicated, and computing empirical statistics requires an expensive repeated sampling from the GPs' posteriors. Interestingly, the following theorem shows that this is not necessary when the reward model follows a Gaussian law, as the expected weights under their unknown distribution can be computed in closed-form.

\begin{restatable}[Reward Weights in Gaussian Models]{theorem}{thmone}\label{th:gauss-w}
Assume each task to have Gaussian reward distribution $\mathcal{R}_j(\cdot |s,a) = \mathcal{N}\left(\mu_r^{(j)}(s,a),\sigma^2_{j}(s,a)\right)$ with unknown mean.
Given the available samples in $\wt{\mathcal{D}}$, we build an estimate of the reward distribution such that, for any MDP $M_j$, $\wb{r}^{(j)}(s,a) \sim \mathcal{N}(\mu_{GP_j}(s,a),\sigma_{GP_j}^2(s,a))$.
Then, given a sample $\langle s,a,r \rangle$ from the $j$-th MDP, its importance weight $w = \frac{\mathcal{N}\left(r | \wb{r}^{(0)}(s,a), \sigma_0^2(s,a)\right)}{\mathcal{N}\left(r | \wb{r}^{(j)}(s,a), \sigma_j^2(s,a)\right)} \sim \mathcal{G}$, where $\mathcal{G}$ is the distribution induced by the GPs' predictions.
Let $C = \frac{\sigma_j^2(s,a)}{\sigma_j^2(s,a)-\sigma_{GP_j}^2(s,a)}$ and suppose  $\sigma_{GP_j}^2(s,a) < \sigma_j^2(s,a)$, then
\begin{equation}\label{eq:exp-weights}
\resizebox{7.45cm}{!}{$
\mathbb{E}_{\mathcal{G}}\left[w\right] =
C\ \frac{\mathcal{N}\left(r \big| \mu_{GP_0}(s,a),\sigma_0^2(s,a) + \sigma_{GP_0}^2(s,a)\right)}{\mathcal{N}\left(r \big| \mu_{GP_j}(s,a),\sigma_j^2(s,a) - \sigma_{GP_j}^2(s,a)\right)}.
$}
\end{equation}
\end{restatable}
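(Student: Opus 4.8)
The plan is to exploit the structure of $w$ as a ratio of two Gaussian densities evaluated at the fixed observation $r$, but whose \emph{means} $\wb{r}^{(0)}(s,a)$ and $\wb{r}^{(j)}(s,a)$ are themselves \emph{random}, distributed according to the independent target and source GP posteriors. Since the two GPs are fit independently, the numerator and the denominator are independent, so the expectation under $\mathcal{G}$ factorizes. Dropping the $(s,a)$ argument and abbreviating $\mu_0=\mu_{GP_0}$, $\tau_0^2=\sigma^2_{GP_0}$, $\mu_j=\mu_{GP_j}$, $\tau_j^2=\sigma^2_{GP_j}$, I would reduce the claim to evaluating
\[
\mathbb{E}_{\mathcal{G}}[w] = \mathbb{E}\big[\mathcal{N}(r \mid \wb{r}^{(0)}, \sigma_0^2)\big]\;\mathbb{E}\big[\mathcal{N}(r \mid \wb{r}^{(j)}, \sigma_j^2)^{-1}\big],
\]
where the two expectations are over the posteriors of $\wb{r}^{(0)}$ and $\wb{r}^{(j)}$, and then handle the two factors separately.

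For the first factor I would use the symmetry $\mathcal{N}(r \mid \wb{r}^{(0)}, \sigma_0^2) = \mathcal{N}(\wb{r}^{(0)} \mid r, \sigma_0^2)$ to rewrite it as the integral over $\wb{r}^{(0)}$ of a product of two Gaussians — one centred at $r$ with variance $\sigma_0^2$, one centred at $\mu_0$ with variance $\tau_0^2$. The standard Gaussian product/convolution identity then gives $\mathcal{N}(r \mid \mu_0, \sigma_0^2 + \tau_0^2)$, which is exactly the numerator of the claimed expression. This step is routine.

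The crux is the second factor, the expectation of the \emph{reciprocal} of a Gaussian density. Writing $\mathcal{N}(r \mid \wb{r}^{(j)}, \sigma_j^2)^{-1} = \sqrt{2\pi\sigma_j^2}\,\exp\!\big((r-\wb{r}^{(j)})^2/(2\sigma_j^2)\big)$ and integrating against $\mathcal{N}(\wb{r}^{(j)} \mid \mu_j, \tau_j^2)$, the exponent becomes a quadratic in $\wb{r}^{(j)}$ whose leading coefficient is $\tfrac{1}{2}(\sigma_j^{-2}-\tau_j^{-2})$. This is the main obstacle: the coefficient is negative — so that the Gaussian integral over $\wb{r}^{(j)}$ converges at all — \emph{precisely} when $\tau_j^2 < \sigma_j^2$, which is exactly the stated hypothesis $\sigma^2_{GP_j}<\sigma_j^2$. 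Completing the square and carrying out the integral yields a prefactor $\sqrt{\sigma_j^2/(\sigma_j^2-\tau_j^2)}$, and after combining the remaining terms over the common denominator $2\sigma_j^2\tau_j^2(\sigma_j^2-\tau_j^2)$ the residual exponent collapses to $(r-\mu_j)^2/(2(\sigma_j^2-\tau_j^2))$.

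Finally I would collect everything. Absorbing the $\sqrt{2\pi\sigma_j^2}$ factor, the second factor is recognized as $C/\mathcal{N}(r \mid \mu_j, \sigma_j^2-\tau_j^2)$ with $C=\sigma_j^2/(\sigma_j^2-\tau_j^2)$, matching the constant in the statement; multiplying by the first factor gives the claimed closed form. The only genuine work is the completing-the-square bookkeeping in the second factor and verifying that its convergence condition coincides with the hypothesis $\sigma^2_{GP_j}<\sigma_j^2$.
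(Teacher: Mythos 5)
Your proposal is correct and follows essentially the same route as the paper's proof: both factorize $\mathbb{E}_{\mathcal{G}}[w]$ into independent expectations over the two GP posteriors, evaluate the numerator via the Gaussian product/convolution identity, and handle the reciprocal-Gaussian denominator by a complete-the-square integration whose convergence requires exactly $\sigma_{GP_j}^2 < \sigma_j^2$. The only difference is presentational: the paper delegates both Gaussian manipulations to the identities of \citet{bromiley2003products}, whereas you carry out the bookkeeping explicitly.
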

The proof is in Appendix~\ref{A:gaussianmodels.rewardweights}.
In practice, we estimate the importance weights by taking their expectation as in~\eqref{eq:exp-weights}, \ie $\widetilde{w} = \mathbb{E}_{\mathcal{G}}[w]$.
Intuitively, using the expected weights is more robust than merely taking the ratio of the estimated densities.
Furthermore, the estimated weights converge to the true ones when the GP predictions are perfect, \ie when $\mu_{GP}(s,a) = \mu_r(s,a)$ and $\sigma_{GP}^2(s,a) \to 0$, both in the source and in the target.
This is a significant advantage over the more common approach of density-ratio estimation \cite{sugiyama2012density}, where a parametric form for the weight function is typically assumed. 
One drawback is that the expectation diverges when $\sigma_{GP_j}^2(s,a) > \sigma_j^2$, that is, when the source GP has a prediction variance that is greater than the intrinsic noise of the model.
Notice, however, that this happens very rarely since the source GP is never asked to predict samples it has not seen during training. 
Furthermore, since in practice the model noise is unknown and has to be estimated, an overestimation is beneficial as it introduces a regularization effect~\cite{mohammadi2016analytic}, thus avoiding the problem mentioned above.

\section{Related Work}
In~\cite{taylor2008transferring}, the authors propose a method to transfer samples for model-based RL. Although they assume tasks might have different state-action space, inter-task mappings are used to map source samples to the target. However, the proposed method does not account for differences in the transition or reward models, which could lead to significant negative transfer when the tasks are different. Our approach, on the other hand, can selectively discard samples based on the estimated difference between the MDPs. \citet{lazaric2008transfer} compute a compliance measure between the target and source tasks and use it to specify from which tasks the transfer is more likely to be beneficial. Furthermore, a relevance measure is computed within each task to determine what are the best samples to transfer. These two measures are then combined to transfer samples into FQI. Once again, our approach does not require any explicit condition to decide what to transfer, nor does it require any assumption of similarity between the tasks. Furthermore, the compliance and relevance measures computed in~\citep{lazaric2008transfer} jointly account for both the reward and transition models, thus discarding samples when either one of the models is very different between the tasks. On the other hand, our approach can retain at least the part of the sample that is similar, at the cost of introducing a small bias. In~\cite{laroche2017transfer}, the authors propose a technique for transferring samples into FQI under the assumption that the transition dynamics do not change between the tasks. Similarly to our method, they learn the reward function at the first iteration and substitute the predicted values to the reward samples in the dataset. This allows them to safely adopt the full set of target and source samples in the remaining FQI iterations, as all tasks share the same transition model and, thus, samples are unbiased. However, we argue that this assumption of shared dynamics indeed limits the applicability of the transfer method to most real-world tasks.

In the supervised learning literature,~\citet{crammer2008learning} analyzed the transfer problem from a theoretical perspective and extended the classical generalization bounds to the case where samples are directly transferred from a set of source tasks. The most relevant result in their bounds is a trade-off between the total number of samples transferred and the total number of tasks from which transfer occurs. Increasing the first term decreases the variance, while it is likely to increase the bias due to the differences between the tasks. On the other hand, decreasing the first term also decreases the bias, but it is likely to increase the variance due to the limited number of samples. We observe that such trade-off does not arise in our case. Our method transfers all samples while accounting for the differences between the tasks. The only bias term is due to the errors in the estimation of the task models, which is likely to decrease as the number of samples increases.

Another work from the supervised learning literature that is related to our approach is~\citep{garcke2014importance}. The authors proposed a method to transfer samples from a different dataset and used importance weighting to correct the distribution shift. However, they leveraged ideas from density-ratio estimation~\citep[\eg][]{sugiyama2012density} and supposed the weight function to have a given parametric form, thus directly estimating it from the data. Conversely, we estimate the densities involved and try to characterize the weight distribution, taking its expectation as our final estimate.

\section{Experiments}
We evaluate IWFQI on three different domains with increasing level of complexity.
%{ \color{red} [ The first one, puddle world, is a synthetic grid-navigation task where %both the transition and reward models are Gaussian, thus representing a perfect %starting point for evaluating our method. The second one, acrobot, is a control %problem in which, more realistically, such assumption does not hold anymore. However, %we show this to have a negligible impact on our method. The third one is a real-world %problem where we want to learn how to optimally control a dam to regulate a water %reservoir. We show the intrinsic difficulty of such problem and the importance of %transferring samples from previous experience. ]}
In all experiments, we compare our method to two existing algorithms for transferring samples into FQI: the relevance-based transfer (RBT) algorithm of \cite{lazaric2008transfer} and the shared-dynamics transfer (SDT) algorithm of \cite{laroche2017transfer}. 

\begin{figure*}[t]
\centering
\begin{minipage}[t]{0.49\linewidth}
\centering
\includegraphics[height=3.3cm]{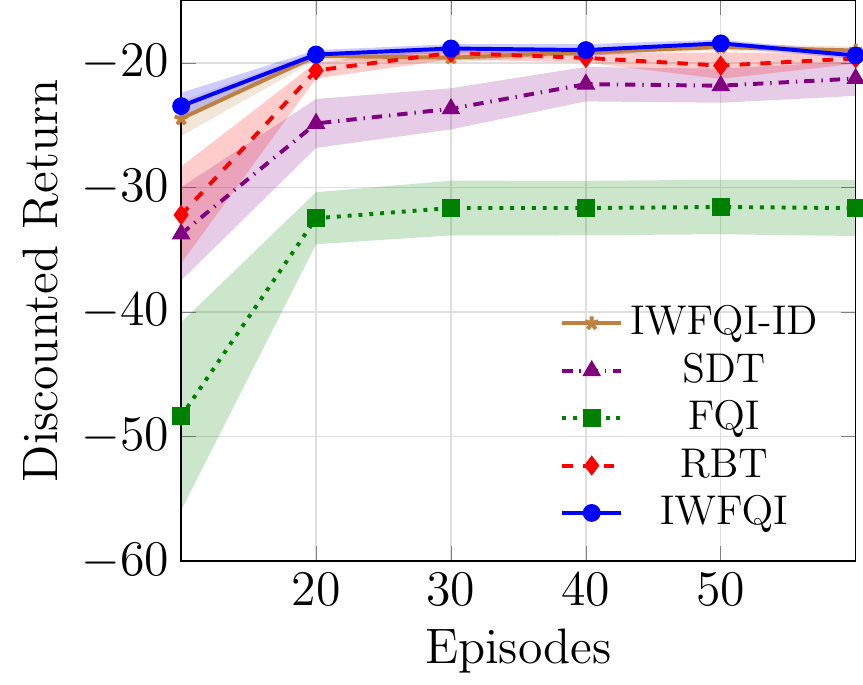}
\includegraphics[height=3.3cm]{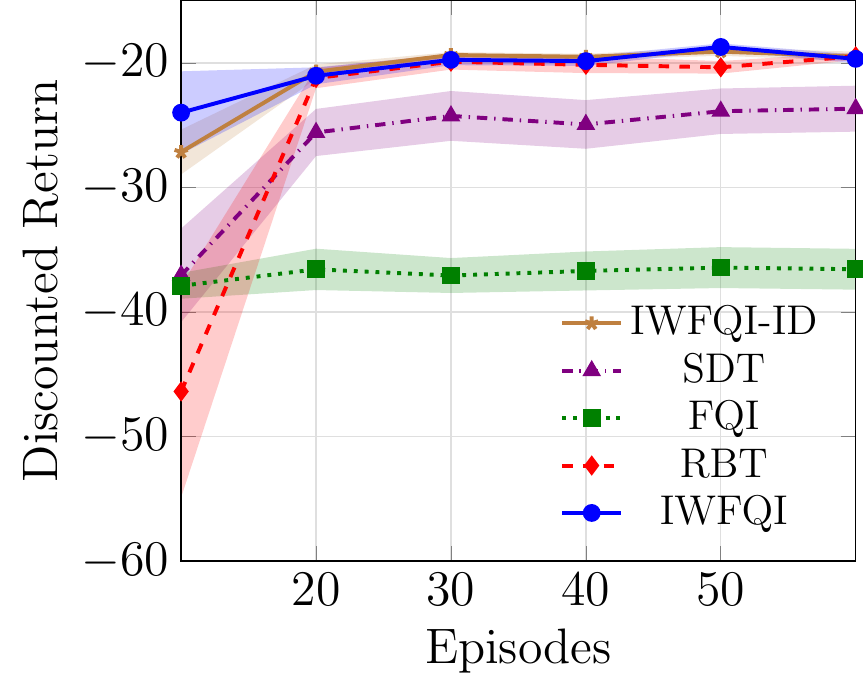}
\caption{Puddle world with $20\times 3$ episodes transferred from $3$ source tasks in the case of shared dynamics  \textit{(left)} and  puddle-based dynamics \textit{(right)}.}
    \label{fig:puddleworld}
\end{minipage}\hfill
\begin{minipage}[t]{0.49\linewidth}
\centering
\includegraphics[height=3.3cm]{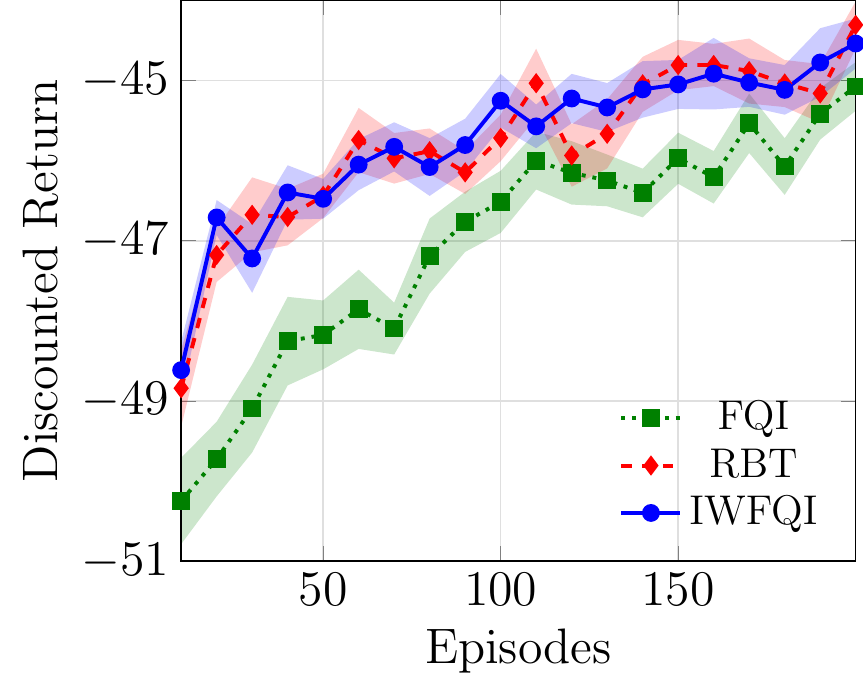} \hfill
\includegraphics[height=3.3cm]{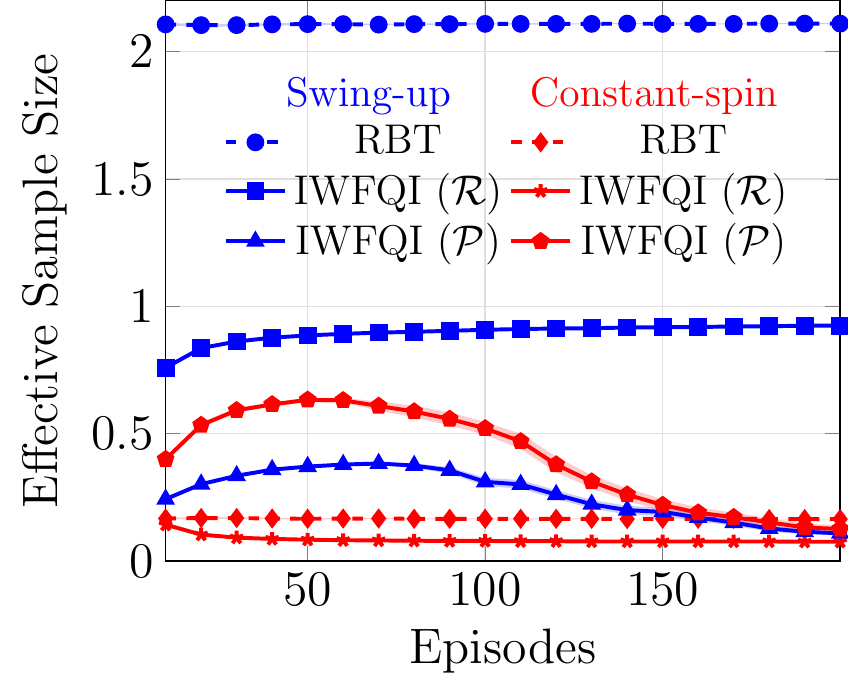}
\caption{Acrobot swing-up with $(100+50)$ episodes transferred from $2$ source tasks. \textit{(left)} learning performance. \textit{(right)} relative number of samples transferred from each source task.
    }
    \label{fig:acrobot}
\end{minipage}
\end{figure*}

\subsection{Puddle World}
Our first experimental domain is a modified version of the puddle world environment presented in~\cite{sutton1996generalization}.
Puddle world is a  discrete-action, continuous-state (stochastic) navigation problem (see Appendix~\ref{app:puddleworld} for a complete description).
At each time-step, the agent receives a reward of $-1$ plus a penalization proportional to the distance from all puddles.
%% Puddle world is a two-dimensional continuous grid with a goal area and some elliptical ``puddles''. The goal is to drive the agent from a starting position to the goal area while avoiding the puddles. The state-space is $[0,10]^2$, while the action-space is discrete and allows the agent to move in the four cardinal directions. At each time-step, the agent receives a reward of $-1$ plus a penalization proportional to the distance from all puddles:
%% $R(s,a) = -1 - 100\sum_{u\in\mathcal{U}}W_u(s)$,
%% % \begin{equation*}
%% % R(s,a) = -1 - 100\sum_{u\in\mathcal{U}}W_u(s),
%% % \end{equation*}
%% where $\mathcal{U}$ is the set of puddles and $W_u(s)$ is the weight of puddle $u$ for state $s$. In the goal the reward is zero. In our experiments, we modeled $W_u(s)$ as a bivariate Gaussian.
Each action moves the agent by $\alpha$ in the corresponding direction. In particular, we consider two versions of the environment: (i) \emph{shared dynamics}, where $\alpha = 1$ is fixed, and (ii) \emph{puddle-based dynamics}, where $\alpha$ slows-down the agent proportionally to the distance from all puddles.
%\begin{equation}
%  \alpha = \frac{1}{1 + 5\sum_{u \in \mathcal{U}} W_{u}(s')}
%\end{equation}
% Finally, a white Gaussian noise  of $\sigma_r^2 = 0.01$ and $\sigma_p^2 = 0.04$ is added to the reward and the transition model, respectively. In our experiments we set $\gamma = 0.99$ and a maximum horizon of $50$ time-steps.

We consider three source tasks and one target task, where each task has different puddles in different locations (see Appendix~\ref{app:puddleworld}). 
% We refer the reader to App.~\ref{A:puddle} for their exact illustration. 
For each source task, we generate a dataset of $20$ episodes from a nearly-optimal policy. We run IWFQI with weights computed according to Equation \eqref{eq:exp-weights}, where we set the model noise to be ten times the true value. For evaluating our weight estimation procedure, we also run IWFQI with ideal importance weights (computed as the ratio of the true distributions). In each algorithm, FQI is run for 50 iterations with Extra-Trees \cite{ernst2005tree}. An $\epsilon$-greedy policy ($\epsilon=0.3$) is used to collect data in the target task. 

\textbf{Shared dynamics.} We start by showing the results for $\alpha = 1$ in Figure~\ref{fig:puddleworld}\textit{(left)}.
%Figure \ref{fig:pw-sd}.
All results are averaged over $20$ runs and are reported with $95\%$ confidence intervals. As expected, FQI alone is not able to learn the target task in such a small number of episodes. On the other hand, IWFQI has a good jump-start and converges to an optimal policy in only $20$ episodes. Interestingly, IWFQI with ideal weights has almost the same performance, thus showing the robustness of our weight estimation procedure. RBT also learns the optimal policy rather quickly. However, the limited number of target and source samples available in this experiment makes it perform significantly worse in the first episodes. Since in this version of the puddle world the dynamics do not change between tasks, SDT also achieves good performance, converging to a nearly-optimal policy.

\textbf{Puddle-based dynamics.} We also show the results for the more challenging version of the environment were puddles both penalize and slow-down the agent 
(see Figure~\ref{fig:puddleworld}\textit{(right)}). 
%(see Fig.~\ref{fig:pw}). 
Notice that, in this case, transition dynamics change between tasks, thus making the transfer more challenging. Similarly, as before, our approach quickly learns the optimal policy and is not affected by the estimated weights. Furthermore, the benefits of over-estimating the model noise can be observed from the small improvement over IWFQI-ID. RBT is also able to learn the optimal policy. However, the consequences of inaccurately computing compliance and relevance are more evident in this case, where the algorithm negatively transfers samples in the first episodes. Finally, SDT still shows an improvement over plain FQI, but it is not able to learn the optimal policy due to the bias introduced by the different dynamics.

\subsection{Acrobot}
Acrobot~\cite{sutton1998reinforcement} is a classic control problem where the goal is to swing-up a two-link pendulum by applying positive or negative torque to the joint between the two links. Due to its non-linear and complex dynamics, Acrobot represents a very challenging problem, requiring a considerable amount of samples to be solved. In this experiment, we consider a multi-task scenario where robots might have different link lengths $(l_1,l_2)$ and masses $(m_1,m_2)$. Our target task is the classic Acrobot \textit{swing-up} problem, where the robot has lengths $(1.0,1.0)$ and masses $(1.0,1.0)$. Furthermore, we consider two source tasks. The first is another swing-up task where the robot has lengths $(1.1,0.7)$ and masses $(0.9,0.6)$.
The second is a \textit{constant-spin} task, where the goal is to make the first joint rotate at a fixed constant speed, with lengths $(0.95,0.95)$ and masses $(0.95,1.0)$. The exact definition of the tasks' dynamics and rewards is in Appendix~\ref{app:acrobot}.
Notice the intrinsic difficulty of transfer: the first source task has the same reward as the target but very different dynamics, and conversely for the second source task. Using nearly-optimal policies, we generate $100$ episodes from the first source and $50$ episodes from the second. We run all algorithms (except SDT since the problem violates the shared-dynamics assumption) for $200$ episodes and average over $20$ runs. Results are shown in 
Figure~\ref{fig:acrobot}\textit{(left)}.
% Fig.~\ref{fig:ac}. 
We notice that both our approach and RBT achieve a good jump-start and learn faster than plain FQI. However, to better investigate how samples are transferred, we show the transfer ratio from each source task in Figure~\ref{fig:acrobot}\textit{(right)}. 
%Fig.~\ref{fig:ac-ess}. 
Since RBT transfers rewards and transitions jointly, it decides to compensate the highly biased reward samples from the constant-spin task by over-sampling the first source task. However, it inevitably introduces bias from the different dynamics. Our approach, on the other hand, correctly transfers almost all reward samples from the swing-up task, while discarding those from the constant-spin task. Due to transition noise over-estimation, IWFQI achieves an interesting adaptive behaviour: during the initial episodes, when few target samples are available, and the GPs are inaccurate, more samples are transferred. This causes a reduction of the variance in the first phases of learning that is much greater than the increase of bias. However, as more target samples are available, the transfer becomes useless, and our approach correctly decides to discard most transition samples, thus minimizing both bias and variance.

\subsection{Water Reservoir Control}

In this experiment, we consider a real-world problem where the goal is to learn how to optimally control a water reservoir system. More specifically, the objective is to learn a per-day water release policy that meets a given demand while keeping the water level below a flooding threshold. \citet{castelletti2010tree} successfully addressed such problem by adopting batch RL techniques. However, the authors proved that, due to the highly non-linear and noisy environment, an enormous amount of historical data is needed to achieve good performance. Consider now the case where a new water reservoir, for which no historical data is available, needs to be controlled. Since each sample corresponds to one day of release, learning by direct interaction with the environment is not practical and leads to poor control policies during the initial years, when only a little experience has been collected. Although we do not know the new environment, it is reasonable to assume that we have access to operational data from existing reservoirs. Then, our solution is to transfer samples to immediately achieve good performance. However, such reservoirs might be located in very different environments and weight objectives differently, thus making transfer very challenging. 

We adopt a system model similar to the one proposed in \cite{castelletti2010tree}. The state variables are the current water storage $s_t$ and day $t \in [1,365]$, while there are $8$ discrete actions, each corresponding to a particular release decision. The system evolves according to the simple mass balance equation $s_{t+1}=s_t + i_t-a_t$, where $i_t$ is the net inflow at day $t$ and is modeled as periodic function, with period of one year, plus Gaussian noise. Given the demand $d$ and the flooding threshold $f$, the reward function is a convex combination of the two objectives, $R(s_t,a_t) = -\alpha\ max\{0,s_t-f\} - \beta (\max\{0,d-a_t\})^2$, where $\alpha, \beta \geq 0$. Different tasks have different inflow functions and different reward weights, which model different geographic regions and objectives, respectively.

We collected $10800$ samples, corresponding to $30$ years of historical data, from each of $6$ source water reservoirs under a hand-coded expert policy. Further details about the tasks are given in Appendix~\ref{app:dam}. We compared our approach to FQI and RBT over the first $10$ years of learning. An $\epsilon$-greedy policy ($\epsilon=0.3$) was used to collect batches of $1$ year of samples, except for the first batch, for which an expert's policy was used. Results, averaged over $20$ runs, are shown in Figure~\ref{fig:dam}. We notice that IWFQI immediately outperforms the expert's policy and quickly achieves near-optimal performance. RBT, on the other hand, has a good jump-start but then seems to worsen its performance. Once again, this is because each source task has at least few samples that can be transferred. However, selecting such samples is very complicated and leads to negative transfer in case of failure. Finally, FQI performs significantly worse than all alternatives and is, thus, not reported.

\begin{figure}[t]
\center
        \includegraphics[height=3cm]{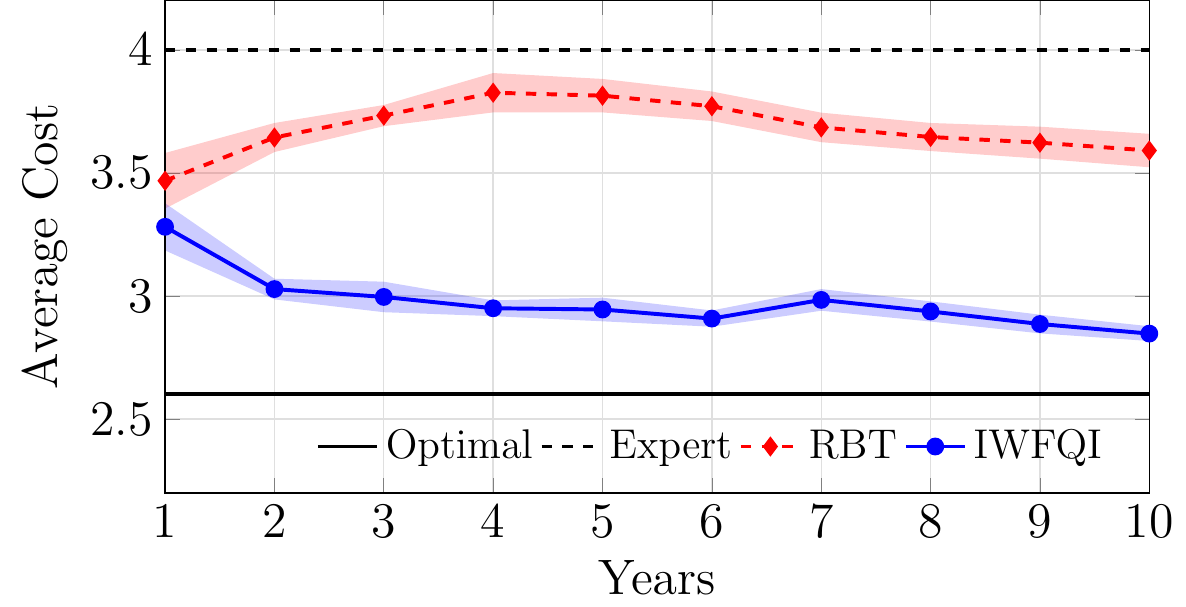}
        \caption{Water reservoir control. Average \emph{cost} per day during the first $10$ years of learning. IWFQI outperforms the expert and quickly achieves near-optimal performance.
        }
        \label{fig:dam}
\end{figure}

\section{Conclusions}

In this paper, we presented Importance Weighted Fitted Q-Iteration, a novel AVI algorithm for transferring samples in batch RL that uses importance weighting to automatically account for the difference in the source and target distributions. 
IWFQI exploits Gaussian processes to learn transition and reward models that are used to compute the importance weights. 
The use of two different processes for reward and transition models allows maximizing the information transferred.
We theoretically investigated IWFQI showing (i) its asymptotic correctness in general settings, and (ii) how to compute a robust statistical estimate of the weights for Gaussian models.
Finally, we empirically proved its effectiveness in common benchmarks and on a real-world water control problem.

One of the drawbacks of our method is that it does not fully exploit possible similarities between tasks. Recent approaches~\citep[\eg][]{doshi2016hidden,killian2017robust} learn models relating a family of tasks to ease the transfer of knowledge. 
Exploring how such relations can benefit our approach (\eg to improve the weight estimates) is an interesting line for future developments.

\clearpage
\bibliography{transfer}
\bibliographystyle{icml2018}

\newpage
\appendix
\onecolumn

\section{Gaussian Models: remarks and proofs}\label{A:gaussianmodels.rewardweights}
In this section we consider the case where the transition and reward models are Gaussian distributions.

A task $M_j = \langle\mathcal{S}, \mathcal{A}, \mathcal{P}_j, \mathcal{R}_j, \gamma \rangle$ is jointly Gaussian when it has:
\begin{itemize}
        \item \emph{Gaussian Reward}: for any $(s,a) \in \mathcal{S}\times\mathcal{A}$
                \begin{equation}\label{E:assumption.gaussianr}
                        \mathcal{R}_j(\cdot|s,a) = \mathcal{N}(\mu_{r}^{(j)}(s,a), \sigma^2_j(s,a)).
                \end{equation}
        \item \emph{Gaussian Transitions}: for any $(s,a) \in \mathcal{S}\times\mathcal{A}$
                \begin{equation}\label{E:assumption.gaussianp}
                        \mathcal{P}_j(\cdot|s,a) = \mathcal{N}(\boldsymbol{\mu}^{(j)}_p(s,a), \bm{\Sigma}_j(s,a))
                \end{equation}.
\end{itemize}
In this setting, by using a Gaussian process to estimate the transition and reward models, we can compute the expected importance weights, where the expectation is taken under the distribution induced by the GPs' predictions, in closed form.
Note that, since reward and transition weights are independent, we can independently consider Assumptions~\ref{E:assumption.gaussianr} and~\ref{E:assumption.gaussianp} for the computation of $w_r$ and $w_p$, respectively.

\thmone*
\begin{proof}
In order to simplify our notation, we consider the dependence on $(s,a,r)$ implicit. We have:
\begin{equation*}
\begin{aligned}
\mathbb{E}\left[ w \right] 
=& \int\int w 
\; \mathcal{N}\left(\wb{r}^{(0)} \big| \mu_{GP_0}, \sigma_{GP_0}^2 \right)
\; \mathcal{N}\left(\wb{r}^{(j)} \big| \mu_{GP_j}, \sigma_{GP_j}^2 \right)
\mathrm{d}\wb{r}^{(0)} \mathrm{d}\wb{r}^{(j)}\\
=& \int\int w
\frac{\exp\left(-\frac{(\wb{r}^{(0)} - \mu_{GP_0})^2}{2\sigma_{GP_0}^2}\right)}{\sqrt{2\pi\sigma_{GP_0}^2}}
\frac{\exp\left(-\frac{(\wb{r}^{(j)} - \mu_{GP_j})^2}{2\sigma_{GP_j}^2}\right)}{\sqrt{2\pi\sigma_{GP_j}^2}}
\mathrm{d}\wb{r}^{(0)} \mathrm{d}\wb{r}^{(j)} \\
=&
\int \frac{\exp\left(-\frac{(r-\wb{r}^{(0)})^2}{2\sigma_0^2}\right)}{\sqrt{2\pi\sigma_0^2}}
\frac{\exp\left(-\frac{(\wb{r}^{(0)} - \mu_{GP_0})^2}{2\sigma_{GP_0}^2}\right)}{\sqrt{2\pi\sigma_{GP_0}^2}} \mathrm{d}\wb{r}^{(0)}
\int \frac{\sqrt{2\pi\sigma_j^2}}{\exp\left(-\frac{(r-\wb{r}^{(j)})^2}{2\sigma_j^2}\right)}
\frac{\exp\left(-\frac{(\wb{r}^{(j)} - \mu_{GP_j})^2}{2\sigma_{GP_j}^2}\right)}{\sqrt{2\pi\sigma_{GP_j}^2}} \mathrm{d}\wb{r}^{(j)}.
\end{aligned}
\end{equation*}
The first integral is over the product of two Gaussian densities, which is known to be~\citep{bromiley2003products}:
\begin{equation}
        \mathcal{N}\left(\wb{r}^{(0)}\big|\bar{\mu}_0,\bar{\sigma}_0^2\right)\mathcal{N}\left(r \big| \mu_{GP_0},\sigma_0^2+\sigma_{GP_0}^2\right),
\end{equation}
where the values of the mean $\bar{\mu}_0$ and variance $\bar{\sigma}_0^2$ of the first density are not important to complete this proof since such density integrates out. By adopting the same procedure as the one described in \cite{bromiley2003products}, we can write the ratio of the two Gaussian densities in the second integral as:
\begin{equation}
        \frac{\sigma_j^2}{\sigma_j^2-\sigma_{GP_j}^2}\frac{\mathcal{N}\left(\wb{r}^{(j)} \big| \bar{\mu}_j,\bar{\sigma}_j^2\right)}{\mathcal{N}\left(r \big| \mu_{GP_j},\sigma_j^2-\sigma_{GP_j}^2\right)},
\end{equation}
where, again, the values of $\bar{\mu}_j$ and $\bar{\sigma}_j^2$ are not relevant to our proof. Finally, we can write:
\begin{equation*}
\begin{split}
\mathbb{E}\left[w(r)\right] =& \int
\mathcal{N}\left(\wb{r}^{(0)} \big| \bar{\mu}_0,\bar{\sigma}_0^2\right)\mathcal{N}\left(r \big| \mu_{GP_0},\sigma_0^2+\sigma_{GP_0}^2\right) \mathrm{d}\wb{r}^{(0)}
\int \frac{\sigma_j^2}{\sigma_j^2-\sigma_{GP_j}^2}\frac{\mathcal{N}\left(\wb{r}^{(j)} \big| \bar{\mu}_j,\bar{\sigma}_j^2\right)}{\mathcal{N}\left(r \big| \mu_{GP_j},\sigma_j^2-\sigma_{GP_j}^2\right)} \mathrm{d}\wb{r}^{(j)} \\
=&
\frac{\sigma_j^2}{\sigma_j^2-\sigma_{GP_j}^2}\frac{\mathcal{N}\left(r\big|\mu_{GP_0},\sigma_0^2+\sigma_{GP_0}^2\right)}{\mathcal{N}\left(r\big|\mu_{GP_j},\sigma_j^2-\sigma_{GP_j}^2\right)}\int
\mathcal{N}\left(\wb{r}^{(0)} \big| \bar{\mu}_0,\bar{\sigma}_0^2\right) \mathrm{d}\wb{r}^{(0)}
\int \mathcal{N}\left(\wb{r}^{(j)} \big| \bar{\mu}_j,\bar{\sigma}_j^2\right) \mathrm{d}\wb{r}^{(j)} \\
=&
\frac{\sigma_j^2}{\sigma_j^2-\sigma_{GP_j}^2} \frac{\mathcal{N}(r\big| \mu_{GP_0},\sigma_0^2+\sigma_{GP_0}^2)}{\mathcal{N}(r \big| \mu_{GP_j},\sigma_j^2-\sigma_{GP_j}^2)}.
\end{split}
\end{equation*}
\end{proof}

We can derive a similar result for the transition model by considering Assumption~\ref{E:assumption.gaussianp}.

\begin{theorem}[Transition Weights in Gaussian Models]
Assume each task to have Gaussian transition distribution $\mathcal{P}_j(\cdot|s,a) = \mathcal{N}(\boldsymbol{\mu}^{(j)}_p(s,a), \bm{\Sigma}_j(s,a))$ with unknown mean. Furthermore, suppose that $\bm{\Sigma}_j(s,a) = diag\left( \delta_{j,1}^2(s,a),\delta_{j,2}^2(s,a),\dots,\delta_{j,D}^2(s,a)  \right)$.
Given the available samples in $\wt{\mathcal{D}}$, we build an estimate of the transition distribution such that, for any MDP $M_j$ and state component $d$, $\wb{p}_d^{(j)}(s,a) \sim \mathcal{N}(\mu_{GP_{j,d}}(s,a),\sigma_{GP_{j,d}}^2(s,a))$.
Then, given a sample $\langle s,a,s' \rangle$ from the $j$-th MDP, the importance weight for the transition model given by $w= \prod_{d=1}^D \frac{\mathcal{N}\left(s_d' \big| \wb{p}_{d}^{(0)}(s,a),\delta_{0,d}^2(s,a)\right) }{\mathcal{N}\left(s_d' \big| \wb{p}_{d}^{(j)}(s,a),\delta_{j,d}^2(s,a)\right)} $ is distributed according to some distribution $\mathcal{G}$ induced by the GPs' predictions.
Let $C_d = \frac{\delta_{j,d}^2(s,a)}{\delta_{j,d}^2(s,a)-\sigma_{GP_{j,d}}^2(s,a)}$ and suppose  $\sigma_{GP_{j,d}}^2(s,a) < \delta_{j,d}^2(s,a)$ for all $d$, then
\begin{equation}\label{eq:exp-weights-p}
\mathbb{E}_{\mathcal{G}}\left[w\right] = \prod_{d=1}^D
C_d\ \frac{\mathcal{N}\left(s_d' \big| \mu_{GP_{0,d}}(s,a),\delta_{0,d}^2(s,a) + \sigma_{GP_{0,d}}^2(s,a)\right)}{\mathcal{N}\left(s_d' \big| \mu_{GP_{j,d}}(s,a),\delta_{j,d}^2(s,a) - \sigma_{GP_{j,d}}^2(s,a)\right)}.
\end{equation}
\end{theorem}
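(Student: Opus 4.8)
The plan is to reduce this multivariate statement to the already-proven univariate reward result, Theorem~\ref{th:gauss-w}, by exploiting the diagonal structure of the transition covariance. First I would observe that, since $\bm{\Sigma}_j(s,a)$ is diagonal, both the source and target transition densities factorize across the $D$ state components, so the importance weight itself is a product $w = \prod_{d=1}^D w_d$ with
\[
w_d = \frac{\mathcal{N}\left(s_d' \big| \wb{p}_d^{(0)}(s,a), \delta_{0,d}^2(s,a)\right)}{\mathcal{N}\left(s_d' \big| \wb{p}_d^{(j)}(s,a), \delta_{j,d}^2(s,a)\right)}.
\]

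Next I would use the fact that a separate, independent Gaussian process is fit to each state component, so the predictions $\{\wb{p}_d^{(0)},\wb{p}_d^{(j)}\}$ are mutually independent across $d$ under the induced distribution $\mathcal{G}$. Consequently the factors $w_1,\dots,w_D$ are independent random variables, and the expectation of their product equals the product of their expectations:
\[
\mathbb{E}_{\mathcal{G}}[w] = \prod_{d=1}^D \mathbb{E}_{\mathcal{G}}[w_d].
\]
Each factor $\mathbb{E}_{\mathcal{G}}[w_d]$ then has exactly the form treated in Theorem~\ref{th:gauss-w}, since the reward case is a single-dimensional Gaussian ratio. Applying that result componentwise, with the substitutions $r \mapsto s_d'$, $\sigma_0^2 \mapsto \delta_{0,d}^2$, $\sigma_j^2 \mapsto \delta_{j,d}^2$, and the GP statistics indexed by $d$, yields
\[
\mathbb{E}_{\mathcal{G}}[w_d] = C_d\ \frac{\mathcal{N}\left(s_d' \big| \mu_{GP_{0,d}}(s,a), \delta_{0,d}^2(s,a) + \sigma_{GP_{0,d}}^2(s,a)\right)}{\mathcal{N}\left(s_d' \big| \mu_{GP_{j,d}}(s,a), \delta_{j,d}^2(s,a) - \sigma_{GP_{j,d}}^2(s,a)\right)},
\]
and taking the product over $d$ gives the claimed expression.

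The main obstacle here is not any computation: it is purely the justification of the factorization, and in particular the independence of the GP predictions across state components. This is genuinely a modeling assumption (one GP per dimension, fit independently) together with the diagonality of $\bm{\Sigma}_j$, rather than a derived fact, so the heart of the proof is recognizing that the problem cleanly decouples into $D$ copies of the one-dimensional case already handled. Once independence and the diagonal structure are granted, no new integral needs to be evaluated, since each per-component integral is identical in form to the one computed in the proof of Theorem~\ref{th:gauss-w}; the condition $\sigma_{GP_{j,d}}^2(s,a) < \delta_{j,d}^2(s,a)$ for every $d$ is precisely what guarantees that each univariate factor converges, exactly as the single-dimensional hypothesis does there.
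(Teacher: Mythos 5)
Your proposal is correct and follows essentially the same route as the paper's own proof: factorize the weight across state components using the diagonal covariance, invoke the independence of the per-dimension GP predictions so that the expectation of the product is the product of expectations, and apply Theorem~\ref{th:gauss-w} componentwise. Your write-up is in fact slightly more explicit than the paper's about why $\mathbb{E}_{\mathcal{G}}[w]=\prod_d \mathbb{E}_{\mathcal{G}}[w_d]$ requires the independence of the GPs across dimensions, which the paper only states in passing.
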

\begin{proof}
Notice that the density $\mathcal{P}_j(\cdot|s,a)$ decomposes into:
\begin{equation}
\mathcal{P}_j(\cdot|s,a) = \prod_{d=1}^D \mathcal{N}\left(\cdot \big| \mu_{p,d}^{(j)}(s,a),\delta_{j,d}^2(s,a)\right).
\end{equation}
By noticing that the $d$-th GP for task $j$ provides an independent estimate of the transition mean's $d$-th component, $\wb{p}_d^{(j)}$, the proof of Theorem \ref{th:gauss-w} can be straightforwardly applied to each component of $s'$, thus proving the theorem.
\end{proof}

\section{Proofs of Theorems}\label{A:proofs}

\thmthree*
\begin{proof}
Applying H\"older's inequality, for all $f\in\mathcal{H}$:
\begin{equation}
\mathbb{E}_Q\left[\left(\widetilde{w}(X,Y)|f(X)-Y|^2\right)^2\right] \leq 16F_{\max}^4\mathbb{E}_Q[\widetilde{w}(X,Y)^2] < \infty.
\end{equation}
Thus, by applying Corollary 1 of \citep{cortes2010learning} to $L_h(x,y) = \widetilde{w}(x,y)|h(x)-y|^2$, we can write:
\begin{equation} \label{eq:bound-h}
\mathbb{E}_Q[\widetilde{w}(X,Y)|\wh{h}(X)-Y|^2] \leq \frac{1}{N}\sum_{i=1}^N \widetilde{w}(x_i,y_i)|\wh{h}(x_i)-y_i|^2 + \\
2^{13/4}F_{\max}^2 \sqrt{\mathbb{E}_Q\left[\widetilde{w}(X,Y)^2\right]} \sqrt[\frac{8}{3}]{\frac{d\log\frac{2Ne}{d} + \log\frac{4}{\delta}}{N}}.
\end{equation}
Let us now expand the left-hand side of \eqref{eq:bound-h}:
\begin{equation} \label{eq:expansion}
\begin{split}
\mathbb{E}_Q\left[\widetilde{w}(X,Y)|\wh{h}(X)-Y|^2\right] =\ & 
\mathbb{E}_\mu\left[ \mathbb{E}_q \left[ \widetilde{w}(X,Y)\left( \wh{h}^2(X) + Y^2 - 2\wh{h}(X)Y \right) \mid X \right] \right] \\ =\ &
\mathbb{E}_\mu\left[ \wh{h}^2(X)\mathbb{E}_q[\widetilde{w}(X,Y) \mid X] + \mathbb{E}_q[ \widetilde{w}(X,Y)Y^2 \mid X] -2\wh{h}(X)\mathbb{E}_q[ \widetilde{w}(X,Y)Y \mid X]\right.\\ &\left.  \pm \mathbb{E}_q^2[ \widetilde{w}(X,Y)Y \mid X] \pm \wh{h}^2(X) \right] \\ =\ &
\mathbb{E}_\mu\left[ \left( \wh{h}(X) - \mathbb{E}_q[ \widetilde{w}(X,Y)Y \mid X] \right)^2 + \wh{h}^2(X)\mathbb{E}_q[\widetilde{w}(X,Y) \mid X] + \mathbb{E}_q[ \widetilde{w}(X,Y)Y^2 \mid X]\right.\\ &\left. - \mathbb{E}_q^2[ \widetilde{w}(X,Y)Y \mid X] - \wh{h}^2(X) \right] \\ =\ &
\|\wh{h}-\widetilde{h}\|_\mu^2 + \mathbb{E}_\mu[\wh{h}^2(X)\left(\mathbb{E}_q[\widetilde{w}(X,Y)-1 \mid X]\right)] + K,
\end{split}
\end{equation}
where $K = \mathbb{E}_\mu[\mathbb{E}_q[ \widetilde{w}(X,Y)Y^2 \mid X] - \mathbb{E}_q^2[ \widetilde{w}(X,Y)Y \mid X]]$ is a constant term (independent of $\wh{h}$) and $\widetilde{h}(x) = \mathbb{E}_q[ \widetilde{w}(x,Y)Y \mid x]$ is the regression function weighted by $\widetilde{w}$. Plugging this into \eqref{eq:bound-h} we get:
\begin{equation} \label{eq:dir1}
\begin{split}
\|\wh{h}-\widetilde{h}\|_\mu^2 + \mathbb{E}_\mu[\wh{h}^2(X)\left(\mathbb{E}_q[\widetilde{w}(X,Y)-1 \mid X]\right)] + K \leq \\ \frac{1}{N}\sum_{i=1}^N \widetilde{w}(x_i,y_i)|\wh{h}(x_i)-y_i|^2 +
2^{13/4}F_{max}^2 \sqrt{\mathbb{E}_Q[\widetilde{w}(X,Y)]} \sqrt[\frac{8}{3}]{\frac{d\log\frac{2Ne}{d} + \log\frac{4}{\delta}}{N}}.
\end{split}
\end{equation}
Consider now the hypothesis $h_0\in\mathcal{H}$ such that $h_0 = \underset{f\in\mathcal{H}}{argmin}\ \|f-\widetilde{h}\|_\mu^2$. Since $h_0$ is in $\mathcal{H}$ and $\wh{h}$ was defined as the hypothesis minimizing the empirical weighted loss, we have:
\begin{equation} \label{eq:empirical}
\frac{1}{N}\sum_{i=1}^N \widetilde{w}(x_i,y_i)|\wh{h}(x_i)-y_i|^2 \leq \frac{1}{N}\sum_{i=1}^N \widetilde{w}(x_i,y_i)|h_0(x_i)-y_i|^2.
\end{equation}
Similarly to what we did for $\wh{h}$, we can bound the empirical error of $h_0$. According to Corollary 1 of \citep{cortes2010learning}, we have that for any $\delta>0$, with probability at least $1-\delta$:
\begin{equation}
\frac{1}{N}\sum_{i=1}^N \widetilde{w}(x_i,y_i)|h_0(x_i)-y_i|^2 \leq \mathbb{E}_Q[\widetilde{w}(X,Y)|h_0(X)-Y|^2] + \\
2^{13/4}F_{max}^2 \sqrt{\widehat{\mathbb{E}}_{\mathcal{D}}[\widetilde{w}(X,Y)]} \sqrt[\frac{8}{3}]{\frac{d\log\frac{2Ne}{d} + \log\frac{4}{\delta}}{N}}.
\end{equation}
By adopting \eqref{eq:expansion} to expand the expected error of $h_0$, we obtain:
\begin{equation} \label{eq:dir2}
\begin{split}
\frac{1}{N}\sum_{i=1}^N \widetilde{w}(x_i,y_i)|h_0(x_i)-y_i|^2 \leq \\ \underset{f\in\mathcal{H}}{inf}\|f-\widetilde{h}\|_\mu^2 + \mathbb{E}_\mu[h_0^2(X)\left(\mathbb{E}_q[\widetilde{w}(X,Y)-1 \mid X]\right)] + K +
2^{13/4}F_{max}^2 \sqrt{\widehat{\mathbb{E}}_{\mathcal{D}}[\widetilde{w}(X,Y)]} \sqrt[\frac{8}{3}]{\frac{d\log\frac{2Ne}{d} + \log\frac{4}{\delta}}{N}}.
\end{split}
\end{equation}
If we now put \eqref{eq:dir1} and \eqref{eq:dir2} together by means of \eqref{eq:empirical}, we get that, with probability at least $1-2\delta$:
\begin{equation} \label{eq:bnd1}
\begin{split}
\|\wh{h}-\widetilde{h}\|_\mu^2 \leq\ & \underset{f\in\mathcal{H}}{inf}\|f-\widetilde{h}\|_\mu^2 + \mathbb{E}_\mu[\left(h_0^2(X)-\wh{h}^2(X)\right)\left(\mathbb{E}_q[\widetilde{w}(X,Y)-1 \mid X]\right)] \\
                                     &\quad{}+
2^{13/4}F_{max}^2\left(\sqrt{\mathbb{E}_Q[\widetilde{w}(X,Y)^2]} + \sqrt{\widehat{\mathbb{E}}_{\mathcal{D}}[\widetilde{w}(X,Y)^2]}\right)\sqrt[\frac{8}{3}]{\frac{d\log\frac{2Ne}{d} + \log\frac{4}{\delta}}{N}} \\ \leq\ &
\underset{f\in\mathcal{H}}{inf}\|f-\widetilde{h}\|_\mu^2 + F_{max}^2\mathbb{E}_\mu\left[\left\vert\mathbb{E}_q[\widetilde{w}(X,Y) \mid X] - 1 \right\vert\right] \\
&\quad{}+
2^{13/4}F_{max}^2\left(\sqrt{\mathbb{E}_Q[\widetilde{w}(X,Y)^2]} + \sqrt{\widehat{\mathbb{E}}_{\mathcal{D}}[\widetilde{w}(X,Y)^2]}\right)\sqrt[\frac{8}{3}]{\frac{d\log\frac{2Ne}{d} + \log\frac{4}{\delta}}{N}} \\ =\ &
\underset{f\in\mathcal{H}}{inf}\|f-\widetilde{h}\|_\mu^2 + F_{max}^2 \|g\|_{1,\mu} + 2^{13/4}F_{max}^2 M(\wt{w})\left(\frac{d\log\frac{2Ne}{d} + \log\frac{4}{\delta}}{N}\right)^{\frac{3}{8}}.
\end{split}
\end{equation}
By taking the square root of both sides of \eqref{eq:bnd1} and using $\sqrt{\sum_i a_i} \leq \sum_i \sqrt{a_i}$ for $a_i \geq 0$, we obtain:
\begin{equation} \label{eq:bnd5}
\|\wh{h}-\widetilde{h}\|_\mu \leq \underset{f\in\mathcal{H}}{inf}\|f-\widetilde{h}\|_\mu + F_{max} \sqrt{\|g\|_{1,\mu}} + 2^{13/8}F_{max} \sqrt{M(\wt{w})}\left(\frac{d\log\frac{2Ne}{d} + \log\frac{4}{\delta}}{N}\right)^{\frac{3}{16}}.
\end{equation}
Furthermore:
\begin{equation} \label{eq:bnd3}
\underset{f\in\mathcal{H}}{inf}\|f-\widetilde{h}\|_\mu \leq \underset{f\in\mathcal{H}}{inf}\|f-h^\star\|_\mu + \|h^\star-\widetilde{h}\|_\mu.
\end{equation}
We can now bound the expected error of $\wh{h}$ with respect to $h^\star$ by:
\begin{equation} \label{eq:bnd4}
\|\wh{h}-h^\star\|_\mu \leq \|\wh{h}-\widetilde{h}\|_\mu + \|\widetilde{h}-h^\star\|_\mu.
\end{equation}
We already provided a bound on the first term, so let us analyze the second one. We have:
\begin{equation} \label{eq:bnd2}
\begin{split}
\|\widetilde{h}-h^\star\|_\mu^2 =\ & \mathbb{E}_\mu[|\widetilde{h}(X)-h^\star(X)|^2] \\ =\ &
\mathbb{E}_\mu[|\mathbb{E}_q[\widetilde{w}(X,Y)Y\mid X]-\mathbb{E}_p[Y\mid X]|^2] \\ =\ &
\mathbb{E}_\mu[|\mathbb{E}_q[\widetilde{w}(X,Y)Y\mid X]-\mathbb{E}_q[w(X,Y)Y\mid X]|^2] \\ =\ &
\mathbb{E}_\mu[|\mathbb{E}_q[Y\left(\widetilde{w}(X,Y)-w(X,Y)\right)\mid X]|^2] \\ \leq\ & 
\mathbb{E}_\mu[\mathbb{E}_q[|Y|^2|\widetilde{w}(X,Y)-w(X,Y)|^2\mid X]] \\ \leq\ &
F_{max}^2\mathbb{E}_Q\left[|\widetilde{w}(X,Y)-w(X,Y)|^2\right] \\ =\ & F_{max}^2\|\widetilde{w}-w\|_Q^2.
\end{split}
\end{equation}
where the first inequality is due to Jensen's inequality. Thus, $\|\widetilde{h}-h^\star\|_\mu \leq F_{max}\|\widetilde{w}-w\|_Q$. By combining \eqref{eq:bnd5}, \eqref{eq:bnd3}, \eqref{eq:bnd4}, and \eqref{eq:bnd2}, we have:
\begin{align*}
\|\wh{h}-h^\star\|_\mu \leq\ & \|\wh{h}-\widetilde{h}\|_\mu + \|\widetilde{h}-h^\star\|_\mu\\ \leq\ & \underset{f\in\mathcal{H}}{inf}\|f-\widetilde{h}\|_\mu + F_{max} \sqrt{\|g\|_{1,\mu}} + 2^{13/8}F_{max} \sqrt{M(\wt{w})}\left(\frac{d\log\frac{2Ne}{d} + \log\frac{4}{\delta}}{N}\right)^{\frac{3}{16}} + \|\widetilde{h}-h^\star\|_\mu\\ \leq\ & \underset{f\in\mathcal{H}}{inf}\|f-h^\star\|_\mu + F_{max} \sqrt{\|g\|_{1,\mu}} + 2^{13/8}F_{max} \sqrt{M(\wt{w})}\left(\frac{d\log\frac{2Ne}{d} + \log\frac{4}{\delta}}{N}\right)^{\frac{3}{16}} + 2\|\widetilde{h}-h^\star\|_\mu\\ \leq\ & \underset{f\in\mathcal{H}}{inf}\|f-h^\star\|_\mu + F_{max} \sqrt{\|g\|_{1,\mu}} + 2^{13/8}F_{max} \sqrt{M(\wt{w})}\left(\frac{d\log\frac{2Ne}{d} + \log\frac{4}{\delta}}{N}\right)^{\frac{3}{16}} + 2F_{max}\|\widetilde{w}-w\|_Q
\end{align*}
which concludes the proof.
\end{proof}

\begin{lemma}\label{lemma:err-r}
Let $\mathcal{H} \subset B(\mathcal{S}\times\mathcal{A},Q_{max})$ be a functional space. Call $g_r(s,a) = \mathbb{E}_{\mathcal{R}_S}[\widetilde{w}_r(r|s,a)]-1$ and $M(\widetilde{w}_r)=\sqrt{\mathbb{E}_{\phi_S^R}[\widetilde{w}_r(r|s,a)^2]} + \sqrt{\widehat{\mathbb{E}}_{\mathcal{D}}[\widetilde{w}_r(r|s,a)^2]}$, where $\phi^R_S(r|s,a) = \mu(s,a)\mathcal{R}_s(r|s,a)$ and $\mathcal{D}$ is a dataset of $N$ i.i.d. samples. Assume $d = Pdim(\{ |f(s,a)-r|^2\ : f\in\mathcal{H} \}) < \infty$ and $\mathbb{E}_{\phi_S^R}[\widetilde{w}_r(r|s,a)^2] < \infty$. Let $\widehat{R}$ be as defined in \eqref{eq:rhat}. Then, for any $\delta>0$, with probability at least $1-2\delta$:
\begin{equation}
\begin{split}
||R-\widehat{R}||_\mu \leq\ &
\underset{f\in\mathcal{H}}{inf}\ ||f-R||_\mu + 
r_{max}\sqrt{||g_r||_{1,\mu}} \\ +\ &
2^{13/8}Q_{max} \sqrt{M(\widetilde{w}_r)}\left(\frac{d\log\frac{2Ne}{d} + \log\frac{4}{\delta}}{N}\right)^{\frac{3}{16}} \\ +\ &
2r_{max}||\widetilde{w}_r-w_r||_{\phi_S^R}.
\end{split}
\end{equation}
\end{lemma}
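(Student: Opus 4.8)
The plan is to recognize that the reward estimate $\widehat R$ in \eqref{eq:rhat} is exactly the importance-weighted empirical risk minimizer analysed in Theorem~\ref{th:regr}, so the lemma should follow by instantiating that theorem on the reward-regression problem. Concretely, I would set the input to be the state–action pair $X=(s,a)$ and the target variable to be the reward sample $Y=r$, so that the source sampling distribution is $Q=\phi_S^R(r|s,a)=\mu(s,a)\mathcal{R}_S(r|s,a)$ and the target conditional is $p(r|s,a)=\mathcal{R}_0(r|s,a)$. Under this identification the ideal weight $w(x,y)=p/q$ is precisely $w_r$ of \eqref{eq:wr-ideal}, the regression function $h^\star(x)=\mathbb{E}_p[Y|x]$ is the target reward $R(s,a)=\mathbb{E}_{\mathcal{R}_0}[r|s,a]$, the estimator $\widehat h$ is $\widehat R$, and $g_r$ and $M(\widetilde w_r)$ are the reward specialisations of $g$ and $M(\widetilde w)$.

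Before invoking the theorem I would verify its premises in this setting: the $N$ samples $\langle s_i,a_i,r_i\rangle$ are i.i.d.\ from $\phi_S^R$; the reward is bounded, $|r|\le\rmax<\infty$ almost surely; the pseudo-dimension assumption $d=Pdim(\{|f(s,a)-r|^2:f\in\mathcal{H}\})<\infty$ holds by hypothesis; and the finite second-moment condition $\mathbb{E}_{\phi_S^R}[\widetilde w_r^2]<\infty$ is assumed. These match exactly the hypotheses of Theorem~\ref{th:regr}, so the theorem applies and delivers, with probability at least $1-2\delta$, a bound of the same four-term shape: an approximation term $\inf_{f\in\mathcal{H}}\|f-R\|_\mu$, a bias term proportional to $\sqrt{\|g_r\|_{1,\mu}}$, an estimation term proportional to $\sqrt{M(\widetilde w_r)}\,(\cdots)^{3/16}$, and a weight-misspecification term proportional to $\|\widetilde w_r-w_r\|_{\phi_S^R}$. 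The $1-2\delta$ confidence is inherited directly.

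The step requiring care, which I expect to be the main obstacle, is that Theorem~\ref{th:regr} is stated with a single constant $F_{\max}$ simultaneously bounding the hypotheses in $\mathcal{H}$ \emph{and} the target $Y$, whereas in the reward problem these two bounds genuinely differ: the rewards obey $|r|\le\rmax$ while the hypotheses obey $|h|\le Q_{\max}$, with $\rmax\le Q_{\max}$. I would therefore retrace the proof of Theorem~\ref{th:regr} and attach the correct constant to each term. The weight-misspecification term descends from \eqref{eq:bnd2}, where the target bound enters only through $|Y|^2\le\rmax^2$, so it carries $\rmax$; the bias term likewise stems from controlling the squared hypotheses tested against $g_r$ when approximating the $\rmax$-bounded function $R$, so it too carries $\rmax$. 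By contrast the estimation term originates from the H\"older step preceding \eqref{eq:bnd1}, namely $\mathbb{E}_Q[(\widetilde w|f-Y|^2)^2]\le 16F_{\max}^4\,\mathbb{E}_Q[\widetilde w^2]$, in which $|f(X)-Y|\le Q_{\max}+\rmax\le 2Q_{\max}$ forces the hypothesis bound $Q_{\max}$. Substituting these distinct constants into the conclusion of Theorem~\ref{th:regr} yields precisely the stated inequality; the delicacy lies entirely in this bookkeeping rather than in any new analytic ingredient.
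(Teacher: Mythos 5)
Your proposal matches the paper's proof, which is literally the single line ``the result follows straightforwardly by applying Theorem~\ref{th:regr}'' under exactly the identification you describe ($X=(s,a)$, $Y=r$, $Q=\phi_S^R$, $h^\star = R$, $\wh{h}=\wh{R}$). Your bookkeeping of where $r_{\max}$ versus $Q_{\max}$ enters goes beyond what the paper records---a naive application with $F_{\max}=Q_{\max}$ would put $Q_{\max}$ on every term---and is in fact needed to recover the stated constants, so this is a more careful version of the same argument rather than a different one.
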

\begin{proof}
The result follows straightforwardly by applying Theorem \ref{th:regr}.
\end{proof}

\begin{lemma}\label{lemma:err-q}
Let $\mathcal{H} \subset B(\mathcal{S}\times\mathcal{A},Q_{max})$ be a functional space. Call $g_p(s,a) = \mathbb{E}_{\mathcal{P}_S}[\widetilde{w}_p(s'|s,a)]-1$ and $M(\widetilde{w}_p)=\sqrt{\mathbb{E}_{\phi_S^P}[\widetilde{w}_p(s'|s,a)^2]} + \sqrt{\widehat{\mathbb{E}}_{\mathcal{D}}[\widetilde{w}_p(s'|s,a)^2]}$, where $\phi^P_S(r|s,a) = \mu(s,a)\mathcal{P}_s(r|s,a)$ and $\mathcal{D}$ is a dataset of $N$ i.i.d. samples. Assume $d = Pdim(\{ |f(s,a)-r|^2\ : f\in\mathcal{H} \}) < \infty$ and $\mathbb{E}_{\phi_S^P}[\widetilde{w}_p(s'|s,a)^2] < \infty$. Let $Q_{k+1}$ be as defined in \eqref{eq:qk} and denote $\wt{L}^*Q(s,a) \coloneqq \wh{R}(s,a) + \int_{\mathcal{S}} \mathcal{P}_T(\mathrm{d}s'|s,a) \max_{a'} Q(s',a)$. Then, for any $\delta>0$, with probability at least $1-2\delta$:
\begin{equation}
\begin{split}
||\wt{L}^*Q_k - Q_{k+1}||_\mu \leq\ &
\underset{f\in\mathcal{H}}{inf}\ ||f-\wt{L}^*Q_k||_\mu + 
Q_{max}\sqrt{||g_p||_{1,\mu}} \\ +\ &
2^{13/8}Q_{max} \sqrt{M(\widetilde{w}_p)}\left(\frac{d\log\frac{2Ne}{d} + \log\frac{4}{\delta}}{N}\right)^{\frac{3}{16}} \\ +\ &
2Q_{max}||\widetilde{w}_p-w_p||_{\phi_S^P}.
\end{split}
\end{equation}
\end{lemma}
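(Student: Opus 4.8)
The plan is to recognize that the minimization defining $Q_{k+1}$ in~\eqref{eq:qk} is, when restricted to the single-source setting of the lemma, exactly an instance of the importance-weighted regression problem analyzed in Theorem~\ref{th:regr}; the result then follows by a careful identification of each object followed by a direct invocation of that theorem. First I would fix the correspondence. The inputs are the state-action pairs $X=(s,a)\sim\mu$, while the targets are $Y=\wh{R}(s,a)+\gamma\max_{a'}Q_k(s',a')$, whose randomness comes \emph{entirely} from the next state $s'$. Since the dataset $\mathcal{D}$ is drawn from the source task $M_S$, the sampling conditional $q(Y\mid X)$ is the one induced by $s'\sim\mathcal{P}_S(\cdot|s,a)$, so the sampling joint distribution is $\phi_S^P(s'|s,a)=\mu(s,a)\mathcal{P}_S(s'|s,a)$, whereas the target distribution is induced by $s'\sim\mathcal{P}_T(\cdot|s,a)$. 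Under this reading the ideal weight $w=p/q$ of Theorem~\ref{th:regr} coincides with $w_p=\mathcal{P}_T/\mathcal{P}_S$ from~\eqref{eq:wp-ideal}.

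The conceptual heart of the argument is to identify the regression function. Taking the conditional expectation of $Y$ under the \emph{target} dynamics yields
\[
h^*(s,a)=\mathbb{E}_p[Y\mid s,a]=\wh{R}(s,a)+\gamma\int_{\mathcal{S}}\mathcal{P}_T(\mathrm{d}s'|s,a)\max_{a'}Q_k(s',a')=\wt{L}^*Q_k(s,a),
\]
so the quantity estimated by the weighted least-squares problem~\eqref{eq:qk} is precisely $\wt{L}^*Q_k$, and $\|\wt{L}^*Q_k-Q_{k+1}\|_\mu=\|\wh{h}-h^*\|_\mu$ in the notation of Theorem~\ref{th:regr}. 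Unlike ordinary noisy-label regression, here the ``noise'' is the stochasticity of the transition kernel, and the target-conditional mean is exactly the idealized Bellman backup we wish to approximate; this is the step that makes the reduction work.

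Before invoking the theorem I would check its hypotheses. The pseudo-dimension and finite-second-moment conditions are assumed directly. For the boundedness requirement, taking $\wh{R}$ bounded by $\rmax$ gives $|Y|\leq\rmax+\gamma Q_{\max}=\rmax+\gamma\frac{\rmax}{1-\gamma}=\frac{\rmax}{1-\gamma}=Q_{\max}$, so the single choice $F_{\max}=Q_{\max}$ is legitimate for both the hypothesis-space bound and the target bound. Applying Theorem~\ref{th:regr} and reading off the correspondences $g\equiv g_p$, $M(\wt{w})\equiv M(\wt{w}_p)$, $Q\equiv\phi_S^P$, $w\equiv w_p$, and $F_{\max}=Q_{\max}$ then reproduces the four claimed terms verbatim, with the stated probability $1-2\delta$ inherited directly from the theorem.

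I expect the only genuine difficulty to be conceptual rather than computational: setting up the probabilistic model so that the two conditionals, their ratio $w_p$, and the target-conditional mean $\wt{L}^*Q_k$ all line up with the abstract statement of Theorem~\ref{th:regr}. Once this identification is in place the estimate is immediate, which is why the proof reduces to a one-line appeal to Theorem~\ref{th:regr}; the only secondary technical point is confirming $|Y|\leq Q_{\max}$ so that a single constant $F_{\max}=Q_{\max}$ propagates through every term of the bound.
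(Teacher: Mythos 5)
Your proposal is correct and matches the paper's proof, which is literally a one-line invocation of Theorem~\ref{th:regr}; your identification of $X=(s,a)$, $Y=\wh{R}(s,a)+\gamma\max_{a'}Q_k(s',a')$, $h^*=\wt{L}^*Q_k$, $Q\equiv\phi_S^P$, $w\equiv w_p$, and $F_{\max}=Q_{\max}$ is exactly the intended reduction. You are in fact more careful than the paper in checking $|Y|\leq Q_{\max}$ (which requires $\wh{R}$ bounded by $\rmax$ rather than merely by $Q_{\max}$, a point the paper silently glosses over).
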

\begin{proof}
The result follows straightforwardly by applying Theorem \ref{th:regr}.
\end{proof}

\thmfour*
\begin{proof}
We can decompose the error at iteration $k$ into:
\begin{align}
\|\epsilon_k\|_\mu 
&= \|L^* Q_k - Q_{k+1}\|_\mu \notag \\ 
&\leq \|L^* Q_k - \wt{L}^*Q_k\|_\mu + \|\wt{L}^*Q_k - Q_{k+1}\|_\mu \notag \\ 
&= \|R - \widehat{R}\|_\mu + \|\wt{L}^*Q_k - Q_{k+1}\|_\mu, \label{PE:epsilonk.decomposition}
\end{align}
where, for any pair $(s,a)$, $\wt{L}^*Q(s,a) \coloneqq \wh{R}(s,a) + \int_{\mathcal{S}} \mathcal{P}_T(\mathrm{d}s'|s,a) \max_{a'} Q(s',a)$ is the optimal Bellman operator of the target task using the approximated reward function defined in~\eqref{eq:rhat}.
The two terms in \eqref{PE:epsilonk.decomposition} can be bounded straightforwardly by applying Lemma \ref{lemma:err-r} and Lemma \ref{lemma:err-q}, respectively.
The application of Lemma \ref{lemma:err-q} to the second term gives rise to 
$\inf_{f\in\mathcal{H}} \|f-\wt{L}^* Q_k\|_\mu$, which can be further bounded by noticing that:
\begin{equation}\label{PE:reg_error.approximateLop}
\begin{split}
        \inf_{f\in\mathcal{H}} \|f-\wt{L}^* Q_k\|_\mu 
\leq & \inf_{f\in\mathcal{H}}\ \|f-L^* Q_k\|_\mu + \|L^* Q_k - \wt{L}^* Q_k\|_\mu.
\end{split}
\end{equation}
The second term in~\eqref{PE:reg_error.approximateLop} is again $\|R - \widehat{R}\|_\mu$, while the first term has already been bounded in Theorem 5.3 of \cite{farahmand2011regularization}:
\begin{equation}\label{eq:prop}
\begin{split}
        \inf_{f\in\mathcal{H}} \|f-L^* Q_k\|_\mu 
        \leq& \inf_{f\in\mathcal{H}} \|f-(L^*)^{k+1}Q_0\|_\mu + \sum_{i=0}^{k-1} (\gamma C_{AE}(\mu))^{i+1}\|\epsilon_{k-i-1}\|_\mu.
\end{split}
\end{equation}
Then, by combining the bounds from Lemma \ref{lemma:err-r} and Lemma \ref{lemma:err-q} with \eqref{PE:reg_error.approximateLop} and \eqref{eq:prop}, we can write:
\begin{align*}
\|\epsilon_k\|_\mu \leq\ & \|R - \widehat{R}\|_\mu + \|\wt{L}^*Q_k - Q_{k+1}\|_\mu\\ \leq\ & \|R - \widehat{R}\|_\mu + \underset{f\in\mathcal{H}}{inf}\ ||f-\wt{L}^*Q_k||_\mu + Q_{max}\sqrt{||g_p||_{1,\mu}}\\ &+ 2^{13/8}Q_{max} \sqrt{M(\widetilde{w}_p)}\left(\frac{d\log\frac{2Ne}{d} + \log\frac{4}{\delta}}{N}\right)^{\frac{3}{16}} +
2Q_{max}||\widetilde{w}_p-w_p||_{\phi_S^P}\\ \leq\ & 
2\|R - \widehat{R}\|_\mu + \inf_{f\in\mathcal{H}} \|f-(L^*)^{k+1}Q_0\|_\mu + \sum_{i=0}^{k-1} (\gamma C_{AE}(\mu))^{i+1}\|\epsilon_{k-i-1}\|_\mu + Q_{max}\sqrt{||g_p||_{1,\mu}}\\ &+ 2^{13/8}Q_{max} \sqrt{M(\widetilde{w}_p)}\left(\frac{d\log\frac{2Ne}{d} + \log\frac{4}{\delta}}{N}\right)^{\frac{3}{16}} +
2Q_{max}||\widetilde{w}_p-w_p||_{\phi_S^P}\\ \leq\ & 
Q_{\max}\sqrt{\|g_p\|_{1,\mu}} + 2\rmax \sqrt{\|g_r\|_{1,\mu}} + 2Q_{\max}\|\wt{w}_p-w_p\|_{\phi^P_S} + 4\rmax\|\wt{w}_r-w_r\|_{\phi^R_S} \\
&+ \inf_{f \in \mathcal{H}} \|f-(L^*)^{k+1}Q_0\|_{\mu} + 2 \inf_{f \in \mathcal{H}} \|f-R\|_{\mu} \\ 
&+ 2^{\frac{13}{8}}Q_{\max}\left(\sqrt{M(\wt{w}_p)}+2\sqrt{M(\wt{w}_r)}\right)
\left(\frac{d\log\frac{2Ne}{d} + \log\frac{4}{\delta}}{N} \right)^{\frac{3}{16}} + \sum_{i=0}^{k-1} (\gamma C_{\textsc{AE}}(\mu))^{i+1}\|\epsilon_{k-i-1}\|_\mu,
\end{align*}
where we recall that $g_p(s,a) = \mathbb{E}_{\mathcal{P}_S}[\wt{w}_p(s'|s,a)|s,a]-1$, $M(\wt{w}_p)=\sqrt{\mathbb{E}_{\phi^P_S}[\wt{w}_p(s'|s,a)^2]} + \sqrt{\wh{\mathbb{E}}_{\mathcal{D}}[\wt{w}_p(s'|s,a)^2]}$, and $\phi^P_S(s'|s,a) = \mu(s,a)\mathcal{P}_S(s'|s,a)$. This concludes the proof.
\end{proof}

\section{Additional Details on the Experiments}

\subsection{Puddle World}\label{app:puddleworld}
Our first experimental domain is a modified version of the puddle world environment presented in \cite{sutton1996generalization}.
Puddle world is a two-dimensional continuous grid with a goal area and some elliptical ``puddles''. The goal is to drive the agent from a starting position to the goal area while avoiding the puddles. The state-space is $[0,10]^2$, while the action-space is discrete and allows the agent to move in the four cardinal directions. At each time-step, the agent receives a reward of $-1$ plus a penalization proportional to the distance from all puddles:
$R(s,a) = -1 - 100\sum_{u\in\mathcal{U}}W_u(s)$,
% \begin{equation*}
% R(s,a) = -1 - 100\sum_{u\in\mathcal{U}}W_u(s),
% \end{equation*}
where $\mathcal{U}$ is the set of puddles and $W_u(s)$ is the weight of puddle $u$ for state $s$. In the goal the reward is zero. In our experiments, we modeled $W_u(s)$ as a bivariate Gaussian.
Each action moves the agent by $\alpha$ in the corresponding direction. In particular, we consider two versions of the environment: I) \emph{shared dynamics} where $\alpha = 1$ and II) \emph{puddle-based dynamics} where puddles also slow-down the agent by: $\alpha = (1 + 5\sum_{u \in \mathcal{U}} W_{u}(s'))^{-1}$.
%\begin{equation}
%  \alpha = \frac{1}{1 + 5\sum_{u \in \mathcal{U}} W_{u}(s')}
%\end{equation}
Finally, a white Gaussian noise  of $\sigma_r^2 = 0.01$ and $\sigma_p^2 = 0.04$ is added to the reward and the transition model, respectively. In our experiments we set $\gamma = 0.99$ and a maximum horizon of $50$ time-steps.

We provide additional details on the puddle world experiments. The target task and the three source tasks can be seen in Figure \ref{fig:pw-tasks}. Notice that the optimal paths to solve each task have at least a small overlapping, thus allowing some knowledge transfer. However, the optimal policy for one task is likely to cross a puddle if carelessly used in another domain. This makes the transfer problem more challenging since the algorithm has to figure out which samples should be retained and which should be discarded.

In both experiments, $20$ episodes were generated beforehand from each source task. For IWFQI, a Gaussian process was fitted on each of the three source datasets using the squared exponential kernel. The noises of the reward and transition models were estimated as $10$ times their true value. In each algorithm, FQI was run for $50$ iterations using Extra-Trees with $50$ estimators and a minimum of $2$ samples to split a node. Results were averaged over $20$ independent runs.

\begin{figure}
  \centering
  \includegraphics[width=.24\linewidth]{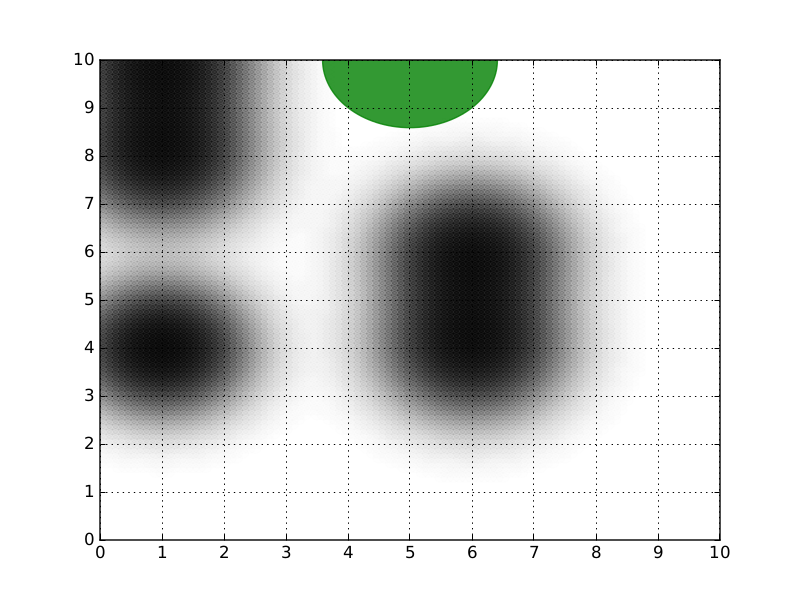}
  \includegraphics[width=.24\linewidth]{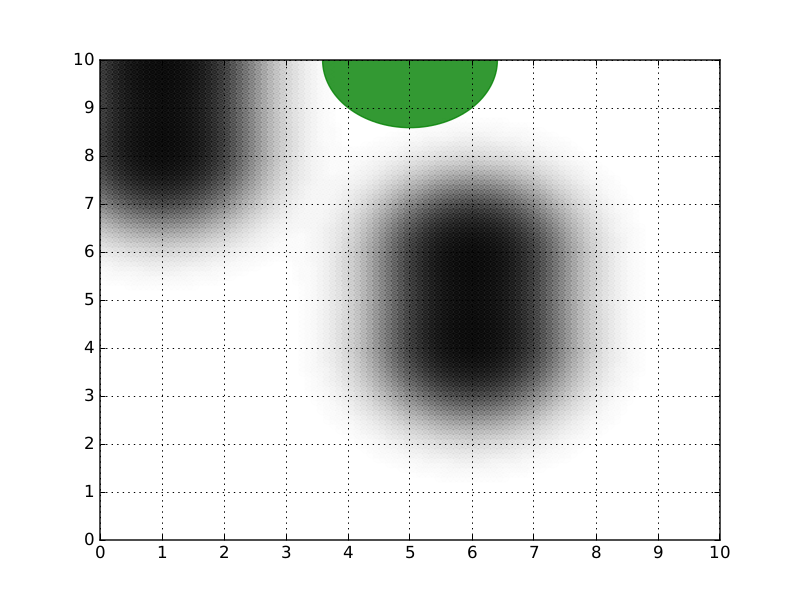}
  \includegraphics[width=.24\linewidth]{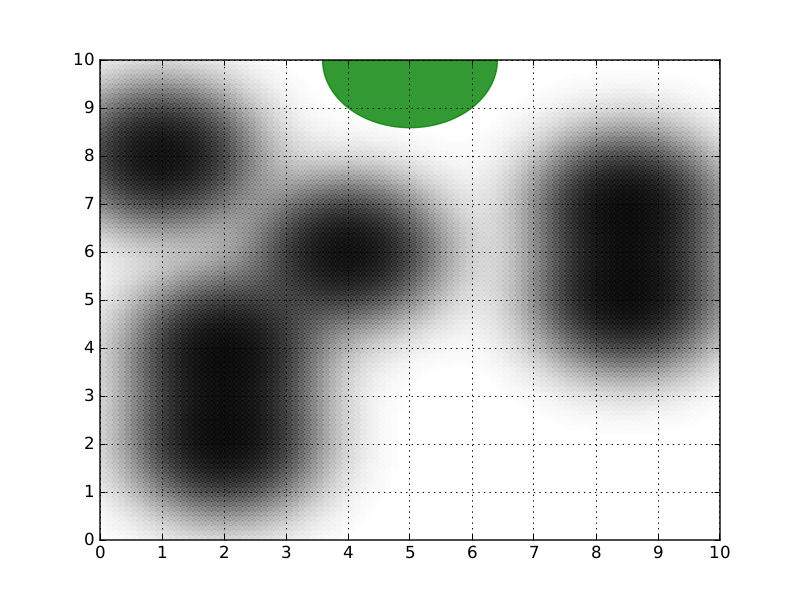}
  \includegraphics[width=.24\linewidth]{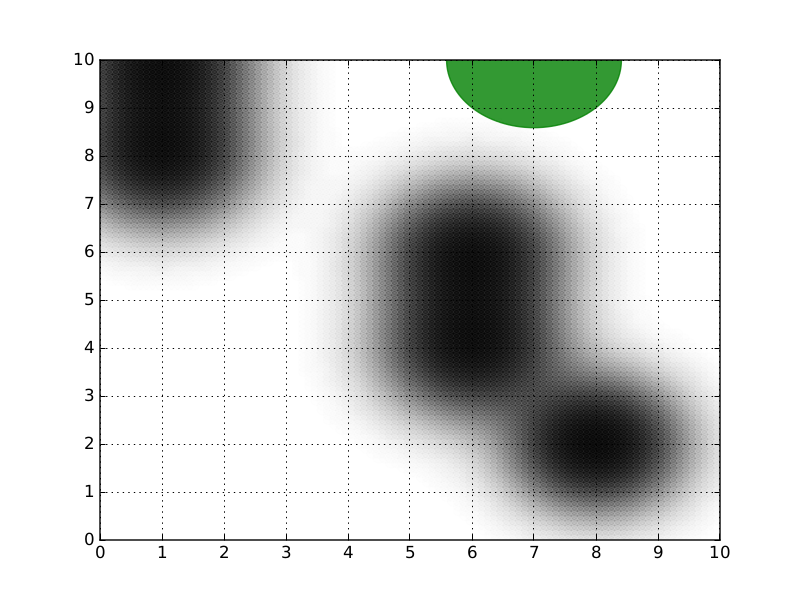}
  \caption{From left to right: the target task and the three source tasks. The agent always starts in the bottom-left corner and must reach the goal area (shown in green). Puddles are shown in black.}
\label{fig:pw-tasks}
\end{figure}

\subsection{Acrobot}\label{app:acrobot}

We provide a precise description of the two tasks used in the Acrobot experiment. For both tasks, the state-space is composed of the two link angles $(\theta_1,\theta_2)$ and their velocities $(\dot{\theta}_1,\dot{\theta}_2)$. The transition dynamics are the ones described in \cite{sutton1998reinforcement}. The agent can only apply a torque of $+2$ or $-2$ to the joint between the two links. The initial state is $(\theta_1,0,0,0)$, where $\theta_1 \sim \mathcal{U}(-2,2)$. Performance is evaluated starting from multiple states $(\theta_1,0,0,0)$, with $\theta_1$ evenly spaced in $[-2,2]$. The swing-up task has reward:
\begin{equation}
R_{sw}(\theta_1,\theta_2,\dot{\theta}_1,\dot{\theta}_2) = -cos(\theta_1) - cos(\theta_1 + \theta_2) - 2,
\end{equation}
and terminates whenever $-cos(\theta_1) - cos(\theta_1 + \theta_2) > 1$ or $100$ time-steps are reached. The constant-spin task has reward:
\begin{equation}
R_{cs}(\theta_1,\theta_2,\dot{\theta}_1,\dot{\theta}_2) = - |\dot{\theta}_1 - \pi|,
\end{equation}
and terminates whenever $100$ time-steps are reached.

We collect $100$ episodes from the first source task (corresponding to $3400$ samples) and $50$ episodes from the second source task (corresponding to $5000$ samples). For all algorithms, FQI uses extra-trees with $50$ estimators and a minimum of $20$ samples to split a node. Data is collected in batches of $10$ episodes using an $\epsilon$-greedy policy ($\epsilon = 0.1$). For IWFQI, GPs use the squared exponential kernel with parameters estimated by maximum likelihood on the data.

To further demonstrate the advantages of our approach, we show what happens when only the constant-spin source task is available. Clearly, most of the reward samples should be discarded, and conversely for the transition samples. As we can see from Figure \ref{fig:ac2-disc} and \ref{fig:ac2-steps}, RBT now performs significantly worse than FQI. This is due to the fact that, by transferring samples jointly, it cannot avoid introducing bias. Our approach, on the other hand, is able to discard the reward samples, thus being robust to negative transfer. Furthermore, it achieves a little improvement over FQI due to the few samples transferred.

\begin{figure}[t]
\centering
    \begin{subfigure}[b]{0.35\textwidth}
        \includegraphics[height=4.3cm]{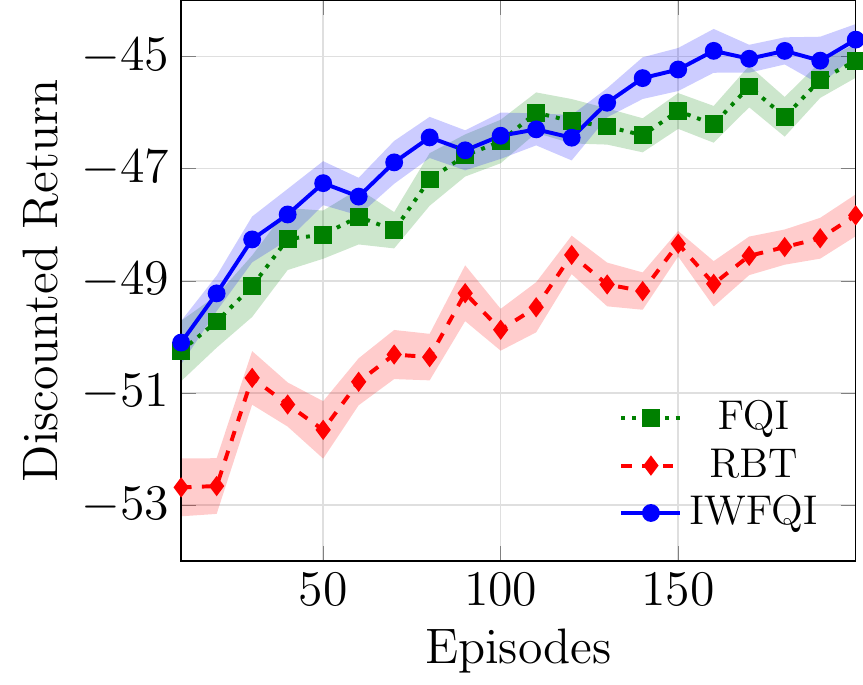}
        \caption{}
        \label{fig:ac2-disc}
    \end{subfigure}
    \begin{subfigure}[b]{0.35\textwidth}
        \includegraphics[height=4.3cm]{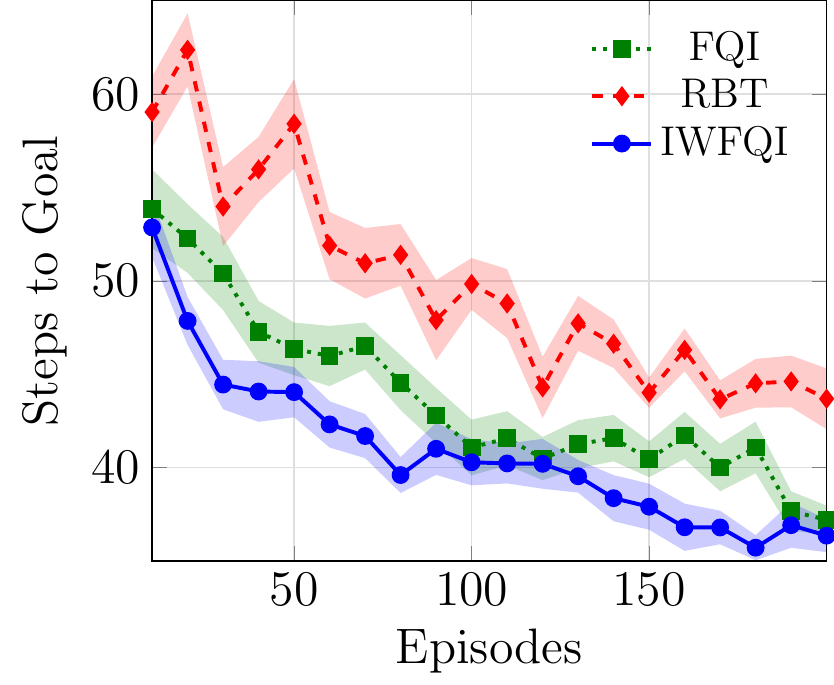}
        \caption{}
        \label{fig:ac2-steps}
    \end{subfigure}
    \caption{Transfer of samples from the constant-spin task to the swing-up task. (a) discounted expected reward and (b) number of steps before reaching a goal state.}
\end{figure}

\subsection{Water Reservoir Control}\label{app:dam}

All tasks used in this experiment are modeled according to the dynamics described in the paper. For the sake of simplicity, each water reservoir is supposed to have capacity of $500\ Mm^3$, minimum storage of $50\ Mm^3$, flooding threshold of $300\ Mm^3$, and per-day demand of $10\ Mm^3$. Due to the different geographic locations, each task has different inflow function $i_j(t) = \bar{i}_j(t) + \mathcal{N}(0,\sigma_p^2)$, where $\sigma_p^2 = 2.0$ is the fixed noise variance. The different mean-inflow functions are shown in Figure \ref{fig:inflow}. Furthermore, each water reservoir weighs the flooding and demand objectives differently. This is modeled by changing the respective weights $\alpha$ and $\beta$. The values for all tasks are reported in Table \ref{tab:coeff}. Notice that there is no source task that is globally similar to the target: either some reward structure is shared or some transition structure is, never both. This makes transfer very challenging since samples have to be accurately selected to prevent detrimental consequences.

In this experiment, we run FQI using extra-trees for $80$ iterations with $100$ estimators and a minimum of $10$ samples to split a node. GPs use the anisotropic squared exponential kernel. For each of the $6$ source reservoirs, we gather $30$ years of historical data where controls are applied by a human operator's policy. We learn the target task by collecting batches of $1$ year, each using an $\epsilon$-greedy policy ($\epsilon=0.3$) on the previously learned Q-function. Evaluation is performed by averaging $10$ trajectories of $1$ year, each starting from January $1^{st}$ and with an initial storage of $200\ Mm^3$ of water.

To better demonstrate the difficulty of this control problem, we run FQI for $500$ episodes (equivalent to $500$ years of interaction). Furthermore, to make the problem simpler, we allow the agent to sample the state-action space arbitrarily, so as to have a better exploration. The result is shown in Figure \ref{fig:dam-fqi}. Although we significantly simplified the problem and we allowed FQI to gather an enormous amount of data, the algorithm still needs almost $500$ years to achieve optimal performance. This demonstrates that solving this task by directly interacting with the real environment is clearly impractical. Thus, transfer of previous knowledge is, in this case, mandatory to achieve good performance.

\begin{table}[h]
  \begin{center}
    \caption{Reward parameters for the different water reservoirs.}
    \label{tab:coeff}
    \begin{tabular}{| l | l | l | l | l | l | l | l |}
    \hline
    Parameter & Target & Source 1 & Source 2 & Source 3 & Source 4 & Source 5 & Source 6 \\ \hline
    $\alpha$ & 0.3 & 0.8 & 0.35 & 0.7 & 0.4 & 0.6 & 0.45 \\ \hline
    $\beta$ & 0.7 & 0.2 & 0.65 & 0.3 & 0.6 & 0.4 & 0.55 \\ \hline
    \end{tabular}
  \end{center}
\end{table}

\begin{figure}[b]
\centering
    \begin{subfigure}[b]{0.4\textwidth}
        \includegraphics[height=4.3cm]{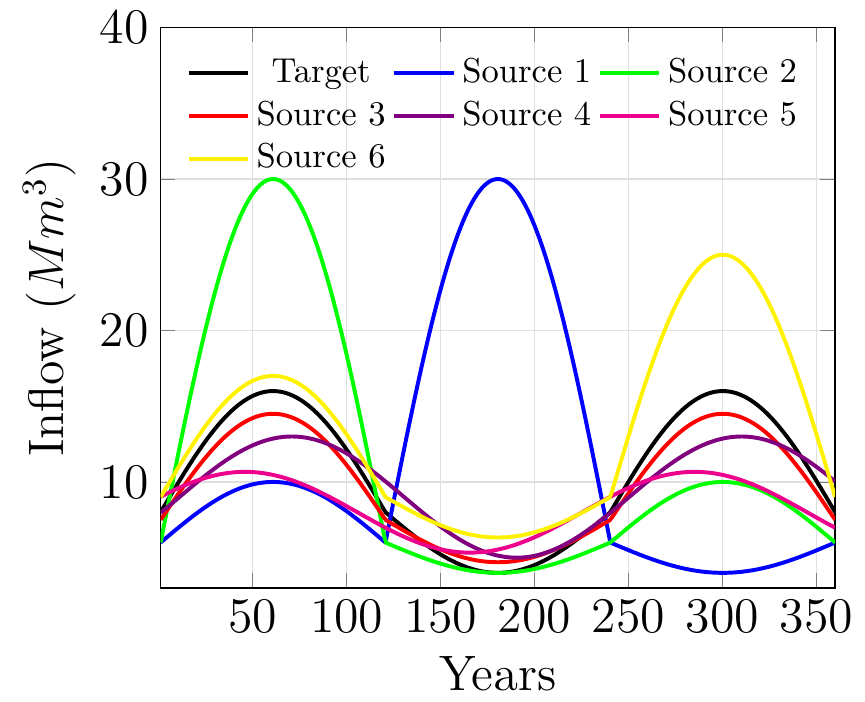}
        \caption{}
        \label{fig:inflow}
    \end{subfigure}
    \begin{subfigure}[b]{0.4\textwidth}
        \includegraphics[height=4.3cm]{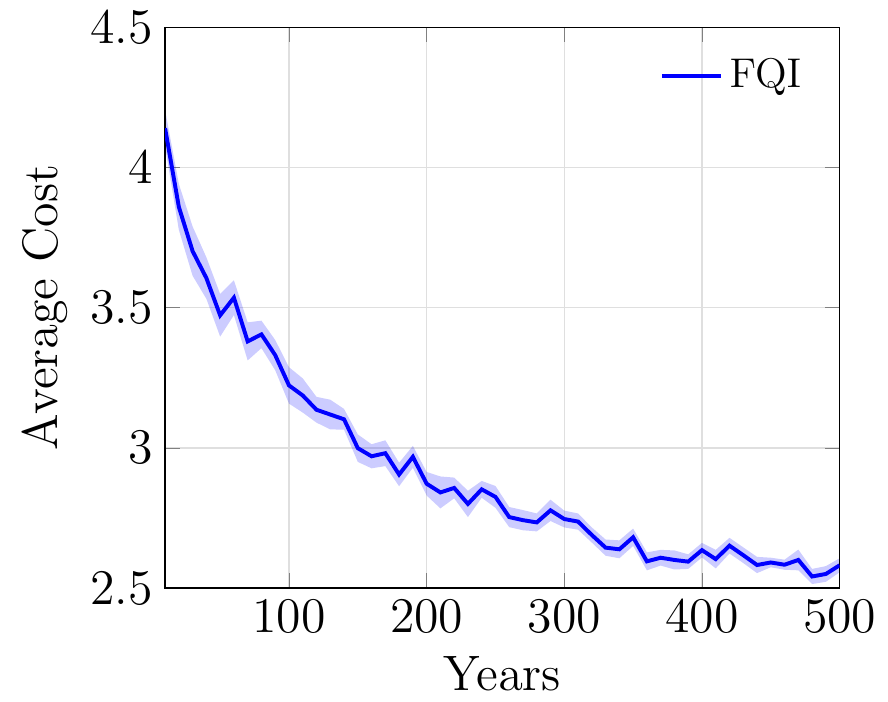}
        \caption{}
        \label{fig:dam-fqi}
    \end{subfigure}
    \caption{Water reservoir control. (a) Inflow profiles for all tasks. (b) Learning without transfer for $500$ years.}
\end{figure}

\end{document}